\documentclass[10pt,twocolumn,letterpaper]{article}

\usepackage[top=0.75in,bottom=0.75in,left=0.68in,right=0.68in,columnsep=0.22in]{geometry}

\usepackage{times}
\usepackage{microtype}
\usepackage[utf8]{inputenc}
\usepackage[T1]{fontenc}
\usepackage{float}
\usepackage{amsmath, amssymb, amsthm}
\usepackage{graphicx}
\usepackage{booktabs}
\usepackage{bm}
\usepackage{multirow}
\usepackage{makecell}
\usepackage{algorithm}
\usepackage{algorithmicx}
\usepackage{algpseudocode}
\usepackage[font=small,labelfont=bf]{caption}
\usepackage{subcaption}
\usepackage[hyphens]{url}
\usepackage{natbib}
\let\cite\citep
\setcitestyle{aysep={,}}

\usepackage[pagebackref=true,breaklinks=true,colorlinks,citecolor=blue,linkcolor=blue,urlcolor=blue]{hyperref}

\newtheorem{theorem}{Theorem}
\newtheorem{lemma}{Lemma}
\newtheorem{definition}{Definition}

\newtheorem{corollary}{Corollary}

\title{\Large \bf Exact and Asymptotically Complete Robust Verification of Neural Networks via Ising Solvers}

\author{
    Wenxin Li$^{1}$, Wenchao Liu$^{1}$, Weihao Li$^{2}$, Chuan Wang$^{3,*}$, Qi Gao$^{1}$, Yin Ma$^{1,3}$, Hai Wei$^{1}$, Kai Wen$^{1,*}$\\[0.3em]
    \small $^{1}$Beijing QBoson Quantum Technology Co., Ltd. \qquad $^{2}$Tsinghua University \qquad $^{3}$Beijing Normal University
}
\date{}

\begin{document}

\maketitle
{\renewcommand{\thefootnote}{\fnsymbol{footnote}}%
\footnotetext[1]{Corresponding authors.}%
\footnotetext[2]{Emails: \texttt{liwx@boseq.com}, \texttt{wangchuan@bnu.edu.cn}, \texttt{wenk@boseq.com}}%
}

\begin{abstract}
We present an Ising-compatible framework for formal neural-network robustness verification under bounded input perturbations. For piecewise-linear activations, the Exact Logarithmic PWL Model (Log-PWL) provides an exact, sound, and complete formulation with a state-optimal logarithmic encoding, reducing the binary variables per neuron from linear to information-theoretically minimal logarithmic complexity. For general bounded element-wise activations, the Asymptotic Step-Envelope Model (Step-Env) uses sound piecewise-constant envelopes whose lower and upper neuron states remain decision variables coupled to a common adversarial input. We prove that its globally optimized output bounds converge uniformly to the true network extrema as the segment width vanishes, yielding asymptotic completeness of verification. We further develop a hybrid Benders solver. Interval pruning, certificate transfer for pruned networks, and layerwise classical--Ising partitioning further reduce spin requirements. Experiments show exact certification fidelity for piecewise-linear networks and near-reference accuracy for sigmoid networks with compact spin budgets.
\end{abstract}

\section{Introduction}

\begin{figure*}[t]
    \centering
    \includegraphics[width=0.78\textwidth]{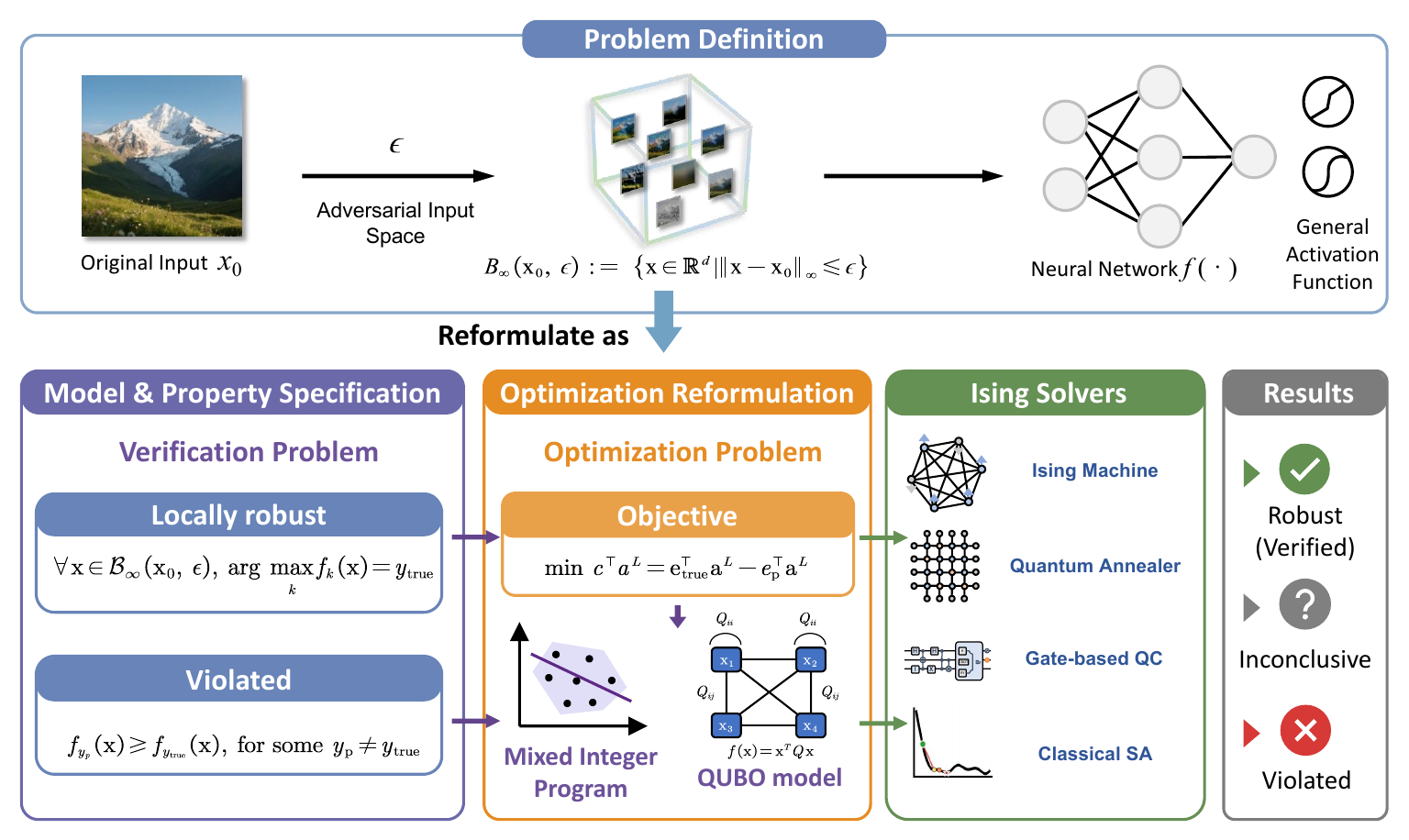} 
    \caption{Overview of the proposed Ising-compatible robustness verification framework. The end-to-end pipeline links network specification, logarithmic encoding, Step-Envelope construction, and hybrid Benders optimization for Ising solvers.} 
    \label{fig:workflow} 
\end{figure*}

Neural networks (NNs) have become a cornerstone of modern artificial intelligence, delivering state-of-the-art performance in computer vision \cite{krizhevsky2012imagenet, he2016deep}, natural language processing \cite{vaswani2017attention, devlin2019bert}, autonomous systems \cite{bojarski2016end}, and scientific discovery \cite{jumper2021highly}. These successes are largely attributed to innovations in deep learning architectures, increased computational power, and the availability of large-scale labeled datasets. Despite their remarkable empirical performance, neural networks remain fundamentally vulnerable to perturbations in their inputs, a property that undermines their reliability in real-world deployments. In safety-critical applications such as autonomous driving, robotic surgery, and automated medical diagnostics, even minor input alterations can lead to severe consequences. For example, as shown in Appendix~\ref{appendix_workflow} (Figure~\ref{fig:traffic_sign}), a vision model trained to detect traffic signs may fail to recognize a stop sign if inconspicuous noise or stickers are added to it, triggering potentially dangerous behavior by a self-driving vehicle \cite{eykholt2018robust}. Such vulnerabilities highlight a fundamental challenge in the design and deployment of trustworthy AI systems: ensuring robustness to small but adversarial or uncertain changes in input data.

Adversarial perturbations \cite{szegedy2013intriguing, goodfellow2014explaining} expose the fragility of neural decision boundaries. Beyond adversarial training \cite{madry2018towards} and certified defenses \cite{wong2018provable, cohen2019certified}, formal verification provides mathematical guarantees that predictions remain invariant within a bounded perturbation region (e.g., an $\ell_p$ ball). To provide such guarantees, classical optimization techniques, such as Satisfiability Modulo Theories (SMT) \cite{katz2017reluplex}, MILP encodings \cite{tjeng2019evaluating}, reachability analysis \cite{wang2018efficient}, and Branch-and-Bound \cite{bunel2018unified}, have been successfully developed, particularly for piecewise-linear ReLU networks. However, exact verification remains NP-hard \cite{katz2017reluplex}, with the combinatorial number of activation patterns growing rapidly with network size.

Verification beyond ReLU has been studied through linear relaxations such as CROWN \cite{zhang2018crown}, abstract interpretation such as DeepPoly \cite{singh2019deeppoly}, dual optimization \cite{dvijotham2018dual}, and refinement or branch-and-bound methods such as VeriNet \cite{henriksen2020verinet} and GenBaB \cite{shi2025genbab}. These methods provide sound guarantees for nonlinear activations such as sigmoid and tanh, but rely primarily on floating-point bound propagation, activation-specific affine transformers, or repeated LP/nonlinear subproblems. Such representations are effective on classical CPU/GPU systems but do not directly provide a compact encoding for spin-constrained Ising hardware. Naively discretizing their continuous variables and constraints can exhaust the available spin budget. The unresolved challenge addressed here is therefore not merely support for nonlinear activations, but how to represent their sound verification in an activation-independent, refinable, and Ising-compatible form.

Physics-inspired Coherent Ising Machines (CIMs), quantum annealers, and related optical Ising solvers offer specialized hardware for QUBO search \cite{100spincim, 2000nodecim, Marandi2014, Honjo2021, johnson2011quantum, Boixo2014, Goto2019, Goto2021, 20k-SpinIsingChip, Cai2020, DigitalAnnealer, song2023training}. Franco et al. \cite{franco2022quantum} reformulate robustness certification as a hybrid quantum--classical QUBO procedure. Building upon these advances, our primary contributions are three-fold:
\begin{itemize}
    \item \textbf{Complementary Ising-Compatible Formulations:} We propose the Exact Logarithmic PWL Model (\textbf{Log-PWL}) and the Asymptotic Step-Envelope Model (\textbf{Step-Env}). Log-PWL targets piecewise-linear activations with an information-theoretically minimal logarithmic spin encoding. Step-Env targets general bounded non-linear activations via sound step-function enclosures, for which we prove uniform convergence to global network extrema and asymptotic completeness.
    \item \textbf{Hybrid Benders Decomposition Framework:} We design a hybrid Benders solver separating discrete combinatorial activation choices from continuous bound propagation to scale robustness verification on Ising hardware.
    \item \textbf{Spin Reduction Techniques and Empirical Validation:} We introduce interval pruning, pruning-induced robustness transfer, and layerwise classical--Ising partitioning to minimize hardware spin requirements. Experiments on Coherent Ising Machines and classical solvers confirm exact certification fidelity for PWL networks and near-reference accuracy for non-linear networks with significantly reduced spin budgets.
\end{itemize}
An overview of the overall verification workflow is illustrated in Figure~\ref{fig:workflow}. An extended review of related work across classical verification, quantized verification, and Ising/quantum computing paradigms is provided in Appendix~\ref{related_work}.

Table~\ref{tab:model_summary} summarizes the characteristics and theoretical guarantees of our proposed models.
\begin{table}[h]
\centering
\resizebox{\columnwidth}{!}{
\begin{tabular}{lccc}
\toprule
\textbf{Model Formulation} & \textbf{Activation Functions} & \textbf{Sound} & \textbf{Guarantee}\\
\midrule
Exact Log-PWL   &  Piecewise Linear & \checkmark  & Exact and complete\\
Step-Envelope  &  Bounded General Nonlinear  & \checkmark & Asymptotically complete\\
\bottomrule
\end{tabular}
}
\caption{Summary of the proposed verification models in this paper}
\label{tab:model_summary}
\end{table}

\section{Robustness Verification Formulations for Deep Neural Networks}

Formal robustness verification seeks to mathematically guarantee that a neural network's predictions remain consistent under bounded input perturbations. To establish a mathematically rigorous framework, we first formally define a feed-forward deep neural network (DNN) and its verification problem, adapting structures commonly used in formal verification literature \cite{liu2021algorithms}.

\begin{definition}[Feed-forward Deep Neural Network]
A feed-forward DNN $\mathcal{N} : \mathbb{R}^{n_0} \to \mathbb{R}^{n_L}$ with $L$ layers is defined as a composition of layer functions $\mathcal{N} \triangleq l_L \circ l_{L-1} \circ \dots \circ l_1$. For any input vector $\mathbf{x} \in \mathbb{R}^{n_0}$, the pre-activation vector $\mathbf{z}^l \in \mathbb{R}^{n_l}$ and activation vector $\mathbf{a}^l \in \mathbb{R}^{n_l}$ at layer $l \in \{1, \dots, L\}$ are computed recursively by $\mathbf{z}^l = \mathbf{W}^l \mathbf{a}^{l-1} + \mathbf{b}^l$ and $\mathbf{a}^l = \sigma^l(\mathbf{z}^l)$, where $\mathbf{a}^0 \triangleq \mathbf{x}$, and $\mathbf{W}^l \in \mathbb{R}^{n_l \times n_{l-1}}$ and $\mathbf{b}^l \in \mathbb{R}^{n_l}$ denote the weight matrix and bias vector of layer $l$, respectively, and $\sigma^l(\cdot)$ denotes the coordinate-wise activation function.
\end{definition}

Using this formal structure, the task of verifying local robustness under bounded adversarial input perturbations is equivalent to proving a lower bound on the output difference.

\begin{definition}[Adversarial Robustness Verification]
Given a neural network $\mathcal{N}$, a nominal input $\mathbf{x}_0 \in \mathbb{R}^{n_0}$ classified as class $c_{\text{true}}$, and a perturbation radius $\varepsilon > 0$, the network is certified locally robust under the $\ell_\infty$-norm iff the ground-truth logit strictly dominates all competing classes $c_p \neq c_{\text{true}}$ over the perturbation ball $\mathcal{X} \triangleq \{ \mathbf{x} \mid \|\mathbf{x} - \mathbf{x}_0\|_\infty \le \varepsilon \}$, i.e., $\min_{\mathbf{x} \in \mathcal{X}} \big( \mathbf{e}_{\text{true}}^\top \mathbf{a}^L - \max_{c_p \neq c_{\text{true}}} \mathbf{e}_p^\top \mathbf{a}^L \big) > 0$, where $\mathbf{e}_{\text{true}}, \mathbf{e}_p \in \{0, 1\}^{n_L}$ are standard one-hot indicator vectors.
\end{definition}

Equivalently, robustness is violated if there exists an adversarial example \(\mathbf{x} \in \mathcal{X}\) and a competing class \(c_p \neq c_{\text{true}}\) such that \(\mathbf{a}^L_{c_p} \ge \mathbf{a}^L_{c_{\text{true}}}\). Based on this formal problem setup, we present two complementary Ising-compatible verification formulations.

\subsection{Exact Logarithmic PWL Model (Log-PWL)}\label{piecewise_linear}

For networks employing piecewise-linear (PWL) activation functions (such as ReLU or Hardtanh), the activation mapping can be modeled exactly by partitioning the domain of each pre-activation into linear segments. For each neuron $j$ in layer $l$, we partition the pre-activation domain into $n$ segments using grid points $v_0 < v_1 < \dots < v_n$.

\paragraph{Conventional One-Hot MILP Representation.} In standard MILP formulations of piecewise-linear networks, segment choices are tracked using a direct one-hot binary encoding $\bm{\beta} = \{\beta_{j,i}^l\}_{l,j,i} \in \{0, 1\}^{N_{\text{bin}}}$, where $\beta_{j,i}^l = 1$ iff pre-activation $z^l_j \in [v_{i-1}, v_i]$, satisfying the uniqueness constraint $\sum_{i=1}^n \beta_{j,i}^l = 1$. Generally, for any one-hot disjunctive formulation, the network's continuous states $\mathbf{y} \in \mathbb{R}^{N_{\text{cont}}}$ (comprising input $\mathbf{x}$, activations $\mathbf{a}^l$, pre-activations $\mathbf{z}^l$, and model-dependent continuous segment/grid variables) and binary segment choices $\bm{\beta}$ compile into a unified mixed-integer linear system:
\begin{equation}
\begin{aligned}
\mathbf{A}_{\text{eq}} \mathbf{y} &= \mathbf{b}_0 + \mathbf{B}_{\text{eq}} \bm{\beta}, \\
\mathbf{C} \mathbf{y} &\leq \mathbf{d}_0 + \mathbf{D} \bm{\beta},
\end{aligned}
\label{eq:milp_system}
\end{equation}
subject to $\sum_{i=1}^n \beta_{j,i}^l = 1$ for all $l, j$. Depending on the specific one-hot modeling paradigm: (i)~\emph{Standard Big-M Model} includes auxiliary gated variables $u_{j,i}^l \triangleq z_j^l \beta_{j,i}^l$ in $\mathbf{y}$ and linearizes bilinear products via global bounds $[L_j^l, U_j^l]$; (ii)~\emph{Multiple-Choice (MC) Model} decomposes states into local segment variables $z_{j,i}^l$ bounded by $v_{i-1} \beta_{j,i}^l \le z_{j,i}^l \le v_i \beta_{j,i}^l$, eliminating Big-M bounds; and (iii)~\emph{Convex Combination (CC) Model} represents states via grid endpoint weights $\lambda_{j,p}^l \ge 0$ coupled linearly to $\bm{\beta}$. While MC and CC models eliminate auxiliary product variables and Big-M bounds, all One-Hot variants still require $n$ binary variables per neuron ($\mathcal{O}(n)$ spin complexity). Complete mathematical formulations, Big-M derivations, Big-M-free One-Hot variants (MC and CC models), and unified matrix constructions are detailed in Appendix~\ref{appendix_one_hot_milp}.

\paragraph{State-Optimal Logarithmic Encoding.} A primary bottleneck in scaling mixed-integer formulations to Ising-based hardware is the number of binary variables, which directly dictates spin requirements. The conventional direct encoding assigns $n$ binary variables per neuron, leading to a combinatorial explosion of $n \sum_{l=1}^L n_l$ binary variables for fine-grained segmentations. To overcome this limitation, we introduce a state-optimal formulation that scales logarithmically with the number of segments, leveraging the disjunctive programming framework of Vielma and Nemhauser \cite{vielma2010how}.

Let $K \triangleq \lceil \log_2 n \rceil$ be the required number of binary variables. To ensure valid SOS2 adjacency without non-adjacent vertex combinations, we assign a unique binary code vector $\mathbf{c}_i \in \{0, 1\}^K$ to each segment $i \in \{1, \dots, n\}$ constructed via a Binary Reflected Gray Code (BRGC) \cite{vielma2010how}, ensuring that codes of adjacent segments differ in exactly one bit position. We introduce binary code variables $\mathbf{g}^l_j \triangleq [g^l_{j,1}, \dots, g^l_{j,K}]^\top \in \{0, 1\}^K$. For each bit position $k \in \{1, \dots, K\}$, we define the partition of segment indices $S_k^0 \triangleq \{ i \in \{1, \dots, n\} \mid c_{ik} = 0 \}$ and $S_k^1 \triangleq \{ i \in \{1, \dots, n\} \mid c_{ik} = 1 \}$. These segment subsets induce the corresponding grid-point subsets $V_k^0 \triangleq \{ v_p \mid [v_{p-1}, v_p] \in S_k^0 \text{ or } [v_p, v_{p+1}] \in S_k^0 \}$ and $V_k^1 \triangleq \{ v_p \mid [v_{p-1}, v_p] \in S_k^1 \text{ or } [v_p, v_{p+1}] \in S_k^1 \}$. Rather than selecting segments directly, activation $a^l_j$ is represented as a convex combination of grid endpoints via continuous weights $\lambda^l_{j,p} \ge 0$ with $\sum_{p=0}^n \lambda^l_{j,p} = 1$:
\begin{equation}
z^l_j = \sum_{p=0}^n \lambda^l_{j,p} v_p, \qquad a^l_j = \sum_{p=0}^n \lambda^l_{j,p} \sigma\left(v_p\right),
\label{eq:log_pwl_convex}
\end{equation}
where the weights satisfy the Special Ordered Set of Type 2 (SOS2) condition. For all $k=1,\dots,K$, we enforce adjacency via linear coupling constraints:
\begin{equation}
\sum_{p: v_p \notin V_k^0} \lambda^l_{j,p} \le g^l_{j,k}, \qquad \sum_{p: v_p \notin V_k^1} \lambda^l_{j,p} \le 1 - g^l_{j,k}.
\label{eq:vn_coupling_net}
\end{equation}
Under this BRGC formulation, the total binary requirement is reduced to $N_{\text{log}} \triangleq \lceil \log_2 n \rceil \sum_{l=1}^L n_l = \lceil \log_2 n \rceil V$, achieving the information-theoretic minimum number of binary variables and drastically reducing the spin overhead for Ising solvers.

\subsection{Asymptotic Step-Envelope Model (Step-Env)}\label{Arbitrary}

\begin{figure}[t]
    \centering
    \includegraphics[width=\linewidth]{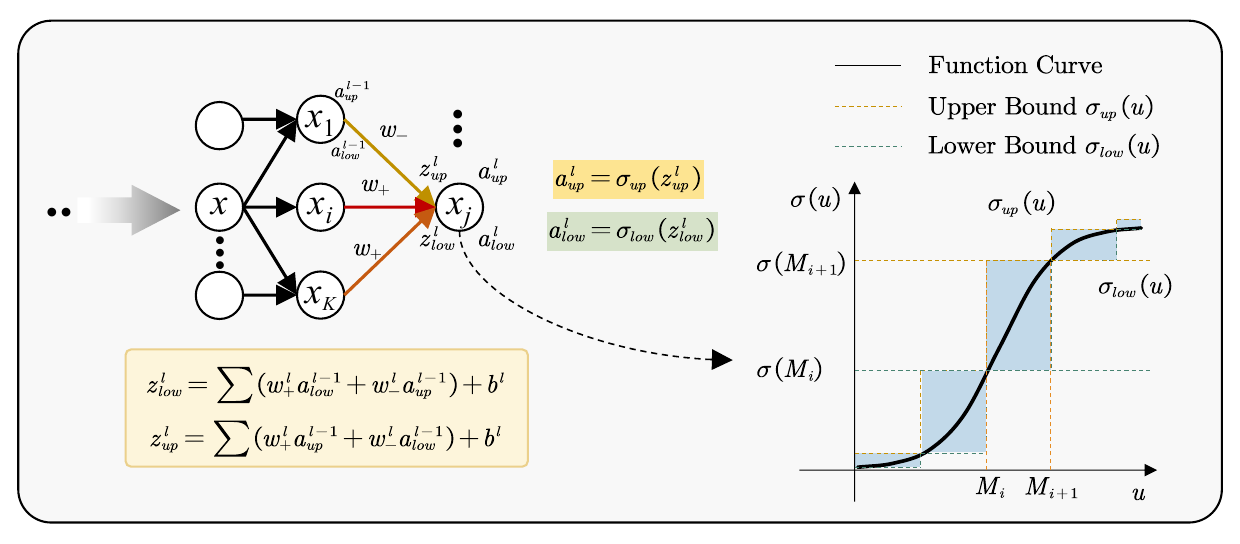} 
    \caption{Quantum-native enclosure of a general nonlinear activation. Certified lower and upper step functions replace activation-specific continuous relaxations with discrete segment-selection states. Refining the partition reduces the activation-envelope error while increasing the Ising-spin budget.}
    \label{fig:example} 
\end{figure}

While Log-PWL provides exact verification for piecewise-linear networks, general smooth activations (such as sigmoid and tanh) induce non-convex continuous constraints. Step-Env addresses this hardware-facing representation gap by constructing certified lower and upper piecewise-constant envelopes over reachable pre-activation segments $M_0 < M_1 < \dots < M_n$:

\begin{definition}[Piecewise-Constant Over-Approximation]
Let $\sigma : \mathbb{R} \to \mathbb{R}$ be a continuous element-wise activation whose extrema can be certified on compact intervals. For pre-activation segments $v_0 < v_1 < \dots < v_n$, the lower step function $\underline{\sigma}(z)$ and upper step function $\bar{\sigma}(z)$ are defined as:
\begin{equation}
\begin{aligned}
\underline{\sigma}(z) &\triangleq \sum_{i=1}^n \underline{\gamma}_i \mathbf{1}(z \in [v_{i-1}, v_i]), \\
\bar{\sigma}(z) &\triangleq \sum_{i=1}^n \bar{\gamma}_i \mathbf{1}(z \in [v_{i-1}, v_i]),
\end{aligned}
\label{eq:step_env_functions}
\end{equation}
satisfying $\underline{\sigma}(z) \le \sigma(z) \le \bar{\sigma}(z)$ for all $z \in [M_0, M_n]$, where $\underline{\gamma}_i \triangleq \min_{z \in [v_{i-1}, v_i]} \sigma(z)$, $\bar{\gamma}_i \triangleq \max_{z \in [v_{i-1}, v_i]} \sigma(z)$, and $\mathbf{1}(\cdot)$ is the indicator function.
\end{definition}

\paragraph{Formulation and Bound Propagation.}
Consider an $L$-layer network. We define the continuous bound vector:
$\mathbf{y} \triangleq \big[\mathbf{x}^\top,  \underline{\mathbf{a}}^{1\top}, \bar{\mathbf{a}}^{1\top}, \underline{\mathbf{z}}^{1\top}, \bar{\mathbf{z}}^{1\top}, \dots, \underline{\mathbf{a}}^{L\top}, \bar{\mathbf{a}}^{L\top}, \underline{\mathbf{z}}^{L\top}, \bar{\mathbf{z}}^{L\top}\big]^\top \in \mathbb{R}^{N_{\text{step}}}$, where $N_{\text{step}} \triangleq n_0 + 4 \sum_{l=1}^{L} n_l$. Pre-activation bounds are propagated through affine layers via interval arithmetic:
\begin{equation}
\begin{aligned}
\underline{z}^l_j &= \sum_{k: w^l_{jk} \ge 0} w^l_{jk} \underline{a}^{l-1}_k + \sum_{k: w^l_{jk} < 0} w^l_{jk} \bar{a}^{l-1}_k + b^l_j, \\
\bar{z}^l_j &= \sum_{k: w^l_{jk} < 0} w^l_{jk} \underline{a}^{l-1}_k + \sum_{k: w^l_{jk} \ge 0} w^l_{jk} \bar{a}^{l-1}_k + b^l_j.
\end{aligned}
\label{eq:step_env_interval_prop}
\end{equation}
Activations are then bounded by step functions defined over pre-activation intervals:
\begin{equation}
\underline{a}^l_j = \sum_{i=1}^n \underline{\gamma}_i \underline{\beta}_{z^l_j}^{(i)}, \qquad \bar{a}^l_j = \sum_{i=1}^n \bar{\gamma}_i \bar{\beta}_{z^l_j}^{(i)},
\label{eq:step_env_bounds}
\end{equation}
coupled with discrete binary segment indicators satisfying $\sum_{i=1}^n \bar{\beta}_{z^l_j}^{(i)} = 1$ and $\sum_{i=1}^n \underline{\beta}_{z^l_j}^{(i)} = 1$. The overall system compiles into global linear constraints:
\begin{equation}
\mathbf{A}_{\text{eq}} \mathbf{y} = \mathbf{b}_0 + \mathbf{B}_{\text{eq}} \bm{\beta}, \qquad \mathbf{C} \mathbf{y} \le \mathbf{d}_0 + \mathbf{D} \bm{\beta}.
\label{eq:step_env_system}
\end{equation}

\paragraph{Theoretical Guarantees.}
Step-Env guarantees mathematical soundness, uniform bound convergence, and asymptotic completeness. First, the step-envelope relaxation strictly over-approximates the exact non-linear trajectory space:

\begin{lemma}[Soundness of Step-Envelope Over-Approximation]\label{lem:soundness}
Let $\mathcal{Y}_{\text{exact}}$ be the set of feasible activation trajectories under the original network, and let $\mathcal{Y}_{\text{approx}}$ be the set of trajectories satisfying the constraints of Step-Env. It holds that $\mathcal{Y}_{\text{exact}} \subseteq \mathcal{Y}_{\text{approx}}$. Consequently, if the output margin optimization over $\mathcal{Y}_{\text{approx}}$ yields a strictly positive objective value, the original non-linear network is provably robust.
\end{lemma}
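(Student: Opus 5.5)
The plan is to prove the inclusion $\mathcal{Y}_{\text{exact}} \subseteq \mathcal{Y}_{\text{approx}}$ constructively. Given any exact trajectory, we exhibit an assignment of the Step-Env variables $(\mathbf{y},\bm{\beta})$ that satisfies every constraint of the system and embeds the exact activations. Soundness of the margin then follows by comparing minima over nested sets.

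Fix $\mathbf{x}\in\mathcal{X}$ and let $(\mathbf{z}^l,\mathbf{a}^l)_{l=1}^L$ be its exact trajectory, so that $\mathbf{a}^0=\mathbf{x}$, $\mathbf{z}^l=\mathbf{W}^l\mathbf{a}^{l-1}+\mathbf{b}^l$ and $\mathbf{a}^l=\sigma^l(\mathbf{z}^l)$. We assume, as the construction of the reachable segments presupposes, that every pre-activation lies in its grid range, $z^l_j\in[M_0,M_n]$. Then for each neuron $(l,j)$ there is an index $i^*$ with $z^l_j\in[v_{i^*-1},v_{i^*}]$. If $z^l_j$ is a grid point, we pick one of the two adjacent segments. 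We set $\underline{\beta}^{(i^*)}_{z^l_j}=\bar{\beta}^{(i^*)}_{z^l_j}=1$ and all other indicators to zero. This satisfies both one-hot constraints and gives $\underline a^l_j=\underline\gamma_{i^*}$ and $\bar a^l_j=\bar\gamma_{i^*}$ in \eqref{eq:step_env_bounds}.

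The core of the argument is the layerwise invariant
\[
\underline a^l_j\le a^l_j\le\bar a^l_j \quad\text{and}\quad \underline z^l_j\le z^l_j\le \bar z^l_j,
\]
which we prove by induction on $l$. The base case is $\underline{\mathbf a}^0=\bar{\mathbf a}^0=\mathbf{x}$. For the inductive step, \eqref{eq:step_env_interval_prop} is monotone in each bound: a coefficient $w^l_{jk}\ge0$ multiplies the lower bound in $\underline z$, and a coefficient $w^l_{jk}<0$ multiplies the upper bound. Hence $\underline z^l_j\le \sum_k w^l_{jk}a^{l-1}_k+b^l_j=z^l_j\le \bar z^l_j$. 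The activation part is immediate from the definition: $\underline\gamma_{i^*}=\min_{[v_{i^*-1},v_{i^*}]}\sigma\le\sigma(z^l_j)\le\max_{[v_{i^*-1},v_{i^*}]}\sigma=\bar\gamma_{i^*}$.

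The step we expect to need the most care is checking that the chosen segment indicators are consistent with the inequality block $\mathbf{C}\mathbf{y}\le\mathbf{d}_0+\mathbf{D}\bm\beta$. That block links each indicator to the interval $[\underline z^l_j,\bar z^l_j]$: the lower (resp.\ upper) selected segment must be compatible with the pre-activation bounds. We need the single segment containing the true $z^l_j$ to be admissible. Since $z^l_j\in[\underline z^l_j,\bar z^l_j]$, the segment containing $z^l_j$ meets this interval. We will argue that the linking constraints, read as "the selected segment intersects $[\underline z,\bar z]$", admit it. Should the constraints instead force the lower indicator to the segment containing $\underline z$ and the upper indicator to the segment containing $\bar z$, we would replace the invariant with the envelope monotonicity facts $\underline\gamma_{i(\underline z)}\le\cdots$, using the minimum and maximum over the union of the covered segments.

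With feasibility established, the exact trajectory is embedded in $\mathcal{Y}_{\text{approx}}$. Therefore the minimum of the margin objective over $\mathcal{Y}_{\text{approx}}$ is at most its minimum over $\mathcal{Y}_{\text{exact}}$. Here the margin is evaluated with $\underline a^L$ for the true class and $\bar a^L$ for competitors, which lower-bounds the exact margin by the invariant. A strictly positive relaxed optimum thus implies that the exact margin is positive for every $\mathbf{x}\in\mathcal{X}$ and every $c_p$, which is certified robustness in the sense of the robustness definition.
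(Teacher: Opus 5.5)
Your proposal follows the same route as the paper's proof and is correct at that level: you use the per-segment inequality $\underline\gamma_{i}\le\sigma(z)\le\bar\gamma_{i}$ on the segment containing the true pre-activation, and push it through the layers by induction using the monotonicity of the sign-aware interval arithmetic \eqref{eq:step_env_interval_prop}, so every exact trajectory embeds in $\mathcal{Y}_{\text{approx}}$ and the margin consequence follows by comparing minima. The linking-constraint issue you flag is real and the paper's proof never addresses it (it implicitly uses your first reading); note that your fallback for the second reading, where $\underline{\beta}$ must locate $\underline z^l_j$ and $\bar\beta$ must locate $\bar z^l_j$, only works if $\sigma$ is nondecreasing (as for sigmoid and tanh), because $\underline\gamma_{i(\underline z)}$ and $\bar\gamma_{i(\bar z)}$ are extrema over a single segment, not over the union of covered segments.
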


Crucially, the globally optimized step-envelope bounds converge uniformly to the true network extrema as segment resolution increases:

\begin{theorem}[Uniform Convergence of Step-Envelope Optimal Bounds]\label{thm:convergence}
Let $\mathcal{X}\subset\mathbb{R}^{n_0}$ be compact, and suppose each element-wise activation $\sigma_l$ is Lipschitz continuous on its compact reachable domain. Under sound piecewise-constant envelopes with maximum segment width $\Delta_{\max} \triangleq \max_l \Delta_l$, there exists a network-dependent constant $C_{\text{net}} > 0$ such that for output coordinate $j$, the globally optimized Step-Env lower and upper bounds $\underline{f}_{j,\bm{\Delta}}$ and $\bar{f}_{j,\bm{\Delta}}$ satisfy:
\begin{equation}
\begin{aligned}
0 \le f_j^- - \underline{f}_{j,\bm{\Delta}} \le C_{\text{net}} \Delta_{\max}, \\
0 \le \bar{f}_{j,\bm{\Delta}} - f_j^+ \le C_{\text{net}} \Delta_{\max},
\end{aligned}
\label{eq:optimal_bound_convergence}
\end{equation}
where $f_j^- \triangleq \min_{\mathbf{x}\in\mathcal{X}}f_j(\mathbf{x})$ and $f_j^+ \triangleq \max_{\mathbf{x}\in\mathcal{X}}f_j(\mathbf{x})$. Consequently, $\lim_{\Delta_{\max}\to 0}\underline{f}_{j,\bm{\Delta}}=f_j^-$ and $\lim_{\Delta_{\max}\to 0}\bar{f}_{j,\bm{\Delta}}=f_j^+$.
\end{theorem}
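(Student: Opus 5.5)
The lower-bound inequalities $0 \le f_j^- - \underline{f}_{j,\bm{\Delta}}$ and $0 \le \bar{f}_{j,\bm{\Delta}} - f_j^+$ follow directly from Lemma~\ref{lem:soundness}. For every $\mathbf{x}\in\mathcal{X}$, the exact trajectory is feasible for Step-Env, taking $\underline{\mathbf{a}}^l=\bar{\mathbf{a}}^l=\mathbf{a}^l$ at the level of the trajectory and selecting the segment containing $z^l_j$ for both indicator families. The envelope value therefore satisfies $\underline{a}^L_j \le f_j(\mathbf{x}) \le \bar a^L_j$. Since the Step-Env optimum minimizes $\underline{a}^L_j$ (respectively maximizes $\bar a^L_j$) over a superset, it lies below $f_j^-$ (respectively above $f_j^+$). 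The substantive part is the upper estimate, which I plan to prove by a layerwise perturbation argument. It shows that every feasible Step-Env point produces an output interval within $C_{\text{net}}\Delta_{\max}$ of the true value $f_j(\mathbf{x})$ at the common adversarial input $\mathbf{x}$.

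Fix any Step-Env-feasible $\mathbf{y}$ with input $\mathbf{x}\in\mathcal{X}$. I interpret the indicator constraints as requiring that the chosen segments contain the propagated pre-activation bounds, i.e.\ $\underline{\beta}^{(i)}=1$ only if $\underline{z}^l_j\in[v_{i-1},v_i]$, and similarly for $\bar\beta$ and $\bar z$. This is the modeling convention implicit in ``coupled to a common input'', and I will state it explicitly. Let $\mathbf{a}^l$ denote the exact activations at $\mathbf{x}$. Define the width $e^l \triangleq \max_j \max\{|\underline{a}^l_j - a^l_j|,\ |\bar a^l_j - a^l_j|\}$, with $e^0=0$ since both bounds equal $\mathbf{x}$ at layer $0$. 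The interval propagation \eqref{eq:step_env_interval_prop} gives $|\underline{z}^l_j - z^l_j|,\ |\bar z^l_j - z^l_j| \le \|\mathbf{W}^l\|_{\infty} e^{l-1}$, using the row-sum norm. For the activation step, $\underline{a}^l_j = \underline{\gamma}_i$ for the segment $i$ containing $\underline{z}^l_j$. The point $\underline{z}^l_j$ lies in $[v_{i-1},v_i]$, and $\underline{\gamma}_i$ is attained at some point of that segment, so with $\mathrm{Lip}_l$ the Lipschitz constant of $\sigma_l$ we get $|\underline{\gamma}_i - \sigma_l(\underline{z}^l_j)| \le \mathrm{Lip}_l\Delta_l$. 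Combining this with $|\sigma_l(\underline{z}^l_j)-\sigma_l(z^l_j)| \le \mathrm{Lip}_l\|\mathbf{W}^l\|_\infty e^{l-1}$ yields the recursion
\begin{equation*}
e^l \le \mathrm{Lip}_l\|\mathbf{W}^l\|_\infty\, e^{l-1} + \mathrm{Lip}_l\,\Delta_l ,
\end{equation*}
and the same holds for the upper bound. Unrolling gives $e^L \le \sum_{l=1}^L \mathrm{Lip}_l \prod_{m=l+1}^L \mathrm{Lip}_m\|\mathbf{W}^m\|_\infty\,\Delta_l \le C_{\text{net}}\Delta_{\max}$. If the output layer is affine, its Lipschitz factor is simply $1$.

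From this, $\underline{a}^L_j \ge f_j(\mathbf{x}) - C_{\text{net}}\Delta_{\max} \ge f_j^- - C_{\text{net}}\Delta_{\max}$ for every feasible point, so the global minimum satisfies $\underline{f}_{j,\bm\Delta} \ge f_j^- - C_{\text{net}}\Delta_{\max}$. The upper bound follows symmetrically. Compactness of $\mathcal{X}$ ensures that $f_j^\pm$ are attained, and it also ensures that the reachable pre-activation domains $[M_0,M_n]$ are bounded. This bounded domain is where the Lipschitz constants are taken, and I must check that all propagated $\underline z,\bar z$ stay inside it for $\Delta_{\max}\le 1$; the envelope range need only be enlarged by a fixed margin. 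The limits then follow immediately.

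The main obstacle I expect is the segment-selection step: the argument fails if the model allows $\underline{\beta}$ to pick a segment unrelated to $\underline{z}^l_j$. In that case the optimizer could select the globally smallest $\underline\gamma_i$ and the gap would not vanish. I would therefore first pin down the linking constraints hidden in $\mathbf{C}\mathbf{y}\le\mathbf{d}_0+\mathbf{D}\bm\beta$, namely big-M or segment-membership inequalities of the form $\sum_i v_{i-1}\underline\beta^{(i)} \le \underline z \le \sum_i v_i\underline\beta^{(i)}$. I would then verify that they force membership up to boundary ties. Ties at a breakpoint are harmless, because both adjacent $\gamma$ values are within $\mathrm{Lip}_l\Delta_l$ of $\sigma_l$ at that point.
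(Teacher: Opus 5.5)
Your proof is correct and follows the paper's skeleton. Soundness gives the two one-sided inequalities. A layerwise recursion with gain $\mathrm{Lip}_l\|\mathbf{W}^l\|_\infty$ and injection $\mathrm{Lip}_l\Delta_l$ unrolls to exactly the paper's $C_{\text{net}}$. The bound then passes to the optimum over $\mathcal{X}$.

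The real difference is the quantity you propagate. The paper tracks the envelope width $\mathbf{e}^l=\bar{\mathbf{a}}^l-\underline{\mathbf{a}}^l$ via Lemma~\ref{lem:general_layerwise_error}. Only at the end does it convert width into one-sided error, using the bracketing $\underline{\mathbf{a}}^L\le\mathbf{f}(\mathbf{x})\le\bar{\mathbf{a}}^L$. You instead track the deviation of each envelope from the exact trajectory at the shared input. This buys three things:
\begin{enumerate}
\item Your convergence half never uses bracketing, so it holds for any Lipschitz $\sigma_l$. Bracketing beyond layer~1 requires $\underline{\gamma}_i\le\sigma_l(z)$ for the segment $i$ containing $\underline{z}^l_j$ and every $z\in[\underline{z}^l_j,\bar{z}^l_j]$. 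That holds for monotone activations such as sigmoid but not in general, so only your ``$0\le$'' half inherits this restriction from Lemma~\ref{lem:soundness}.
\item Your injection term really is $\mathrm{Lip}_l\Delta_l$. For the width, when $\underline{z}^l_j$ and $\bar{z}^l_j$ lie in different segments, one only gets $\bar{a}^l_j-\underline{a}^l_j\le\mathrm{Lip}_l(\bar{z}^l_j-\underline{z}^l_j)+2\mathrm{Lip}_l\Delta_l$, and this is essentially attained already for affine $\sigma_l$. The paper's route, done carefully, therefore yields $2C_{\text{net}}$; that is harmless for an existence claim, but yours recovers the stated constant.
\item You make explicit the segment-membership linking constraints and the domain margin. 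The paper uses both silently when it writes $\underline{\sigma}_l(z)$ for the segment containing $z$. As you observe, without the linking constraints neither argument works.
\end{enumerate}

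One sentence needs repair. The point $\underline{\mathbf{a}}^l=\bar{\mathbf{a}}^l=\mathbf{a}^l$ is not Step-Env-feasible, because the constraints force $\underline{a}^l_j$ to equal one of the step values $\underline{\gamma}_i$. What you need, as in the paper, is the following: for each $\mathbf{x}$, the Step-Env point with input $\mathbf{x}$ and consistent indicators satisfies $\underline{\mathbf{a}}^l\le\mathbf{a}^l(\mathbf{x})\le\bar{\mathbf{a}}^l$ by induction (Lemma~\ref{lem:soundness}). Evaluating at a minimizer of $f_j$ gives $\underline{f}_{j,\bm{\Delta}}\le f_j^-$, and symmetrically at a maximizer.
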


Theorem~\ref{thm:convergence} establishes an explicit $\mathcal{O}(\Delta_{\max})$ bound convergence rate. Consequently, for any network with a strictly positive robust margin $m^\star > 0$, a finite segment resolution $\Delta_{\max} < m^\star / (2C_{\text{net}})$ guarantees robust certification (asymptotic completeness). Formal proofs for soundness, uniform convergence, and asymptotic completeness are in Appendix~\ref{appendix_step_env_theory}.

\paragraph{Application and More Advantages of Logarithmic Encoding.}
Applying logarithmic encoding to Step-Env provides three key benefits: (i)~\emph{Exponential Spin Reduction}: replacing linear one-hot indicators with binary code variables $\mathbf{g} \in \{0, 1\}^{\lceil \log_2 n \rceil}$ compresses spin complexity from $\mathcal{O}(n)$ to $\mathcal{O}(\log_2 n)$ per neuron (e.g., from 64 to 10 spins at $n=32$); (ii)~\emph{Scalable Fine-Grained Certification}: it permits fine-grained partitioning ($n \ge 16$) to suppress envelope approximation error without triggering spin budget explosion; and (iii)~\emph{Benders Master Problem Compression}: in the Benders framework, logarithmic encoding reduces the Master Problem decision variables from $n V$ one-hot indicators to $V \lceil \log_2 n \rceil$ bit variables.

\section{Ising Optimization Mappings and Hybrid Algorithms}\label{sec:quantum_opt}

To solve the verification formulations on Ising hardware, one direct approach is constructing a monolithic QUBO model by discretizing continuous state variables and inequality slacks into binary spins via quadratic penalties (detailed in Appendix~\ref{appendix_qubo}). However, monolithic encoding suffers from severe spin explosion and penalty sensitivity. To overcome these bottlenecks, we propose a hybrid Benders decomposition solver that cleanly separates discrete decision states from continuous verification bounds.

\subsection{Hybrid Benders Decomposition Framework}\label{subsec:benders}

Under the proposed hybrid Benders decomposition, discrete activation choices are cleanly decoupled from continuous verification bounds. The \emph{master problem}, formulated as a QUBO and executed on Ising solvers, searches over the combinatorial space of binary activation decisions $\bm{\beta} \in \{0, 1\}^P$. Given a candidate pattern $\bm{\beta}$, the continuous \emph{subproblem} $\mathrm{SP}(\bm{\beta})$ is evaluated classically to minimize output bounds over continuous state variables $\mathbf{y} \in \mathbb{R}^{d_{\text{cont}}}$:
\begin{equation}\label{eq:benders_subproblem}
\begin{aligned}
\mathrm{SP}(\bm{\beta}): \quad \min_{\mathbf{y} \in \mathbb{R}^{d_{\text{cont}}}} \quad & \tilde{\mathbf{c}}^\top \mathbf{y} \\
\text{s.t.} \quad & \mathbf{A}\mathbf{y} = \mathbf{b}_0 + \mathbf{B}\bm{\beta}, \quad \mathbf{C}\mathbf{y} \le \mathbf{d}_0 + \mathbf{D}\bm{\beta},
\end{aligned}
\end{equation}
where $\mathbf{y}$ collects the input perturbation and layerwise activation bounds (detailed in Appendix~\ref{appendix_benders}). Solving the dual of $\mathrm{SP}(\bm{\beta})$ generates optimality cuts when feasible or feasibility cuts when infeasible, iteratively pruning or constraining the master search space.

\section{Spin Complexity Reduction Techniques}

\subsection{Pruning-Induced Robustness Transfer}\label{sec:prune_bit}

Network pruning removes redundant parameters, narrowing variable ranges and reducing the Ising spin budget. We establish a theoretical framework that transfers formal verification guarantees from a simplified pruned model $g$ to the original network $f$.

Let $\mathbf{r}(\mathbf{z}) \triangleq f(\mathbf{z}) - g(\mathbf{z})$ denote the pruning residual. Assuming a uniform residual bound $\tau \ge 0$ such that $\|\mathbf{r}(\mathbf{x}+\bm{\delta})\|_\infty \le \tau$ for all $\|\bm{\delta}\|_p \le \varepsilon$, we first establish a margin stability property:

\begin{lemma}[Margin Stability]\label{lem:margin_stability}
For any logit vectors $\mathbf{a}, \mathbf{b} \in \mathbb{R}^K$ and target class $y$, the classification margin $m(\mathbf{a}) \triangleq a_y - \max_{k \neq y} a_k$ satisfies the Lipschitz continuity bound:
\begin{equation}\label{eq:margin_2inf}
|m(\mathbf{a} + \mathbf{b}) - m(\mathbf{a})| \le 2 \|\mathbf{b}\|_\infty.
\end{equation}
\end{lemma}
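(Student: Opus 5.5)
The plan is to split the margin into its two constituent terms and bound each perturbation separately by $\|\mathbf{b}\|_\infty$. Write $m(\mathbf{a}) = a_y - M(\mathbf{a})$ with $M(\mathbf{a}) \triangleq \max_{k\neq y} a_k$. Then
\begin{equation*}
m(\mathbf{a}+\mathbf{b}) - m(\mathbf{a}) = b_y - \bigl(M(\mathbf{a}+\mathbf{b}) - M(\mathbf{a})\bigr).
\end{equation*}
The triangle inequality gives
\begin{equation*}
|m(\mathbf{a}+\mathbf{b}) - m(\mathbf{a})| \le |b_y| + |M(\mathbf{a}+\mathbf{b}) - M(\mathbf{a})|.
\end{equation*}
The first term is trivially at most $\|\mathbf{b}\|_\infty$. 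It therefore suffices to show that the max over the competing coordinates is $1$-Lipschitz in the $\ell_\infty$ norm.

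For that step I would use the standard two-sided argument for a finite maximum. For every $k\neq y$ we have $a_k + b_k \le a_k + \|\mathbf{b}\|_\infty \le M(\mathbf{a}) + \|\mathbf{b}\|_\infty$. Taking the max over $k\neq y$ gives $M(\mathbf{a}+\mathbf{b}) \le M(\mathbf{a}) + \|\mathbf{b}\|_\infty$. Symmetrically, pick $k^\star$ attaining $M(\mathbf{a})$. Then $M(\mathbf{a}+\mathbf{b}) \ge a_{k^\star} + b_{k^\star} \ge M(\mathbf{a}) - \|\mathbf{b}\|_\infty$. Combining the two inequalities gives $|M(\mathbf{a}+\mathbf{b}) - M(\mathbf{a})| \le \|\mathbf{b}\|_\infty$, which yields the claimed constant $2$. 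The maximum is attained because the index set is finite and nonempty; here I assume $K\ge 2$, since otherwise $m$ is undefined.

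There is no real obstacle; the only point needing care is this Lipschitz property of the max. One could optionally remark that the constant $2$ is tight, e.g.\ with $\mathbf{b} = -t\mathbf{e}_y + t\mathbf{e}_{k^\star}$. That remark is not needed for the subsequent transfer argument, which only uses the upper bound together with $\|\mathbf{r}\|_\infty\le\tau$, so that $m(f)\ge m(g)-2\tau$.
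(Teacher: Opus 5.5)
Your proof is correct and takes essentially the same route as the paper: both bound the $b_y$ term and the shift of $\max_{k\neq y}$ separately by $\|\mathbf{b}\|_\infty$. The only difference is cosmetic. The paper proves the one-sided inequality $m(\mathbf{a}+\mathbf{b}) \ge m(\mathbf{a}) - 2\|\mathbf{b}\|_\infty$ and gets the reverse by swapping the roles of $\mathbf{a}$ and $\mathbf{a}+\mathbf{b}$, whereas you prove the two-sided $1$-Lipschitz property of the maximum directly and combine via the triangle inequality.
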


Applying Lemma~\ref{lem:margin_stability} to the residual $\mathbf{r}(\mathbf{x}+\bm{\delta})$ yields the robustness transfer theorem:

\begin{theorem}[Pruning-Induced Robustness Transfer]\label{thm:transfer}
Fix nominal sample $\mathbf{x}$ with label $y$ and perturbation radius $\varepsilon > 0$. Given computable bounds $L_g(\mathbf{x};\varepsilon) \le \Phi_g(\mathbf{x};\varepsilon) \le U_g(\mathbf{x};\varepsilon)$ on pruned model $g$, the original model's margin satisfies:
\begin{equation}\label{eq:transfer}
L_g(\mathbf{x};\varepsilon)-2\tau \;\le\; \Phi_f(\mathbf{x};\varepsilon) \;\le\; U_g(\mathbf{x};\varepsilon)+2\tau.
\end{equation}
Hence, if $L_g(\mathbf{x};\varepsilon) > 2\tau$, the original network $f$ is certified robust at $\mathbf{x}$; if $U_g(\mathbf{x};\varepsilon) \le -2\tau$, it is provably non-robust.
\end{theorem}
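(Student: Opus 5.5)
We first pin down the quantities involved. The natural reading is that $\Phi_h(\mathbf{x};\varepsilon) \triangleq \min_{\|\bm{\delta}\|_p\le\varepsilon} m\big(h(\mathbf{x}+\bm{\delta})\big)$ for $h\in\{f,g\}$, i.e.\ the worst-case margin over the perturbation ball. Robustness of $h$ at $\mathbf{x}$ means $\Phi_h>0$, and non-robustness means $\Phi_h\le 0$.

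The proof is a pointwise application of Lemma~\ref{lem:margin_stability}, followed by passing to the minimum over $\bm{\delta}$. Fix any admissible $\bm{\delta}$ and write $f(\mathbf{x}+\bm{\delta}) = g(\mathbf{x}+\bm{\delta}) + \mathbf{r}(\mathbf{x}+\bm{\delta})$. Apply Lemma~\ref{lem:margin_stability} with $\mathbf{a}=g(\mathbf{x}+\bm{\delta})$ and $\mathbf{b}=\mathbf{r}(\mathbf{x}+\bm{\delta})$. Together with the uniform residual bound $\tau$, this gives
\begin{equation*}
\big|m(f(\mathbf{x}+\bm{\delta})) - m(g(\mathbf{x}+\bm{\delta}))\big| \le 2\tau
\end{equation*}
for every $\bm{\delta}$ in the ball.

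Next we take minima over $\bm{\delta}$. The pointwise inequality says the two functions $\bm{\delta}\mapsto m(f(\cdot))$ and $\bm{\delta}\mapsto m(g(\cdot))$ are uniformly within $2\tau$ on the ball. Their infima are therefore also within $2\tau$, so $|\Phi_f-\Phi_g|\le 2\tau$. To see this, pick a near-minimizer of one function and evaluate the other function there, in both directions. The argument does not require the minima to be attained, since we work with infima.

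Combining with $L_g\le\Phi_g\le U_g$ gives $L_g-2\tau\le\Phi_g-2\tau\le\Phi_f\le\Phi_g+2\tau\le U_g+2\tau$, which is \eqref{eq:transfer}. The two corollaries are immediate:
\begin{itemize}
\item If $L_g>2\tau$, then $\Phi_f>0$, so $f$ is certified robust.
\item If $U_g\le-2\tau$, then $\Phi_f\le 0$. This means some $\bm{\delta}$ (or a limit point, by compactness of the ball and continuity of $f$) gives a nonpositive margin, so $f$ is non-robust.
\end{itemize}

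The only delicate point is the definitional one: the statement leaves $\Phi$ and the norm $p$ implicit. We must make sure the residual bound $\tau$ is assumed over exactly the same perturbation set that defines $\Phi$, so that the pointwise bound holds everywhere the minimum ranges. With that fixed, the remaining steps are routine.
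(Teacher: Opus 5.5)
Your proof is correct and follows the paper's argument. You apply Lemma~\ref{lem:margin_stability} pointwise with $\mathbf{a}=g(\mathbf{x}+\bm{\delta})$ and $\mathbf{b}=\mathbf{r}(\mathbf{x}+\bm{\delta})$, take the infimum over the perturbation ball in both directions to get $|\Phi_f-\Phi_g|\le 2\tau$, and then sandwich with $L_g\le\Phi_g\le U_g$, exactly as the paper does. Your extra care about infima versus attained minima, and about assuming $\tau$ on the same perturbation set that defines $\Phi$, makes explicit what the paper leaves implicit.
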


Dataset-level certification bounds $\underline{\mathrm{CA}}_f(\varepsilon), \overline{\mathrm{CA}}_f(\varepsilon)$, margin stability proofs, and closed-form derivations of $\tau$ are detailed in Appendix~\ref{appendix_pruning_details}.

\subsection{Layerwise Partitioning for Scalable Verification}\label{subsec:scalable-partition-cim}

To verify deep architectures beyond raw hardware spin limits, we propose a layerwise partitioning scheme. We split an $L$-layer network $f = \mathsf{S} \circ \mathsf{P}$ at a cut index $\tau$ into a prefix $\mathsf{P} \triangleq f_\tau \circ \dots \circ f_1$ and a suffix $\mathsf{S} \triangleq f_L \circ \dots \circ f_{\tau+1}$.

The prefix reachable set is soundly outer-approximated via classical bound propagation (e.g., IBP or CROWN) as $\widehat{\mathcal{R}}_\tau(X) \supseteq \mathcal{R}_\tau(X) = \prod_{j=1}^{m_\tau} [\underline{z}_j, \bar{z}_j]$. Robustness verification is then restricted exclusively to the suffix by solving the suffix margin optimization:
\begin{align}
\label{eq:margin}
\gamma \triangleq \min_{\mathbf{z} \in \widehat{\mathcal{R}}_\tau(X)} \left[ \mathbf{S}_y(\mathbf{z}) - \max_{y' \neq y} \mathbf{S}_{y'}(\mathbf{z}) \right].
\end{align}

By applying the QUBO mapping and hybrid optimization frameworks established in Section~\ref{sec:quantum_opt}, this suffix optimization problem can be directly converted into QUBO form and solved using Ising solvers. A non-negative optimum $\gamma \ge 0$ certifies robustness of the full network while drastically reducing the required spin capacity. A non-negative optimum $\gamma \ge 0$ certifies robustness of the full network while reducing required spin capacity.

\section{Experimental Evaluation}\label{sec:experiments}

We evaluate the proposed framework along three complementary dimensions: formulation fidelity, solver computational profile, and hybrid scalability. Detailed experimental setup parameters (datasets, architectures, baseline verifiers, and evaluation metrics) are provided in Appendix~\ref{appendix_exp_setup}.

\begin{figure}[t]
\centering
\includegraphics[width=0.9\linewidth]{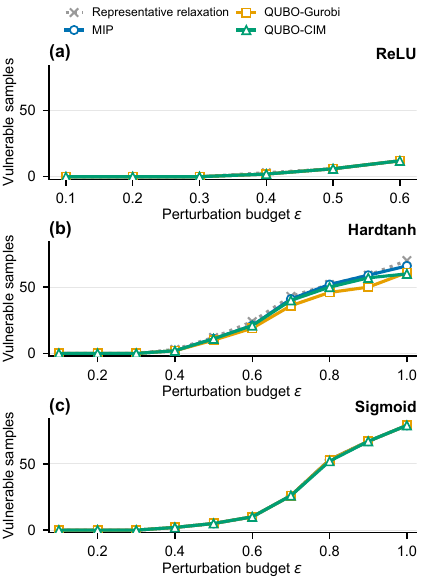}
\caption{Global-QUBO verification fidelity across activation functions. For ReLU, both QUBO solvers reproduce the exact MIP reference at every budget. For Hardtanh, QUBO solutions track MIP closely. For Sigmoid, Step-Env tracks MIP almost exactly (1-sample deviation at $\epsilon=0.8$).}
\label{fig:global_qubo_fidelity}
\end{figure}

\subsection{Fidelity of the Global QUBO Formulation}

Figure~\ref{fig:global_qubo_fidelity} summarizes verification outcomes across activations. For ReLU, QUBO-Gurobi and QUBO-CIM identically match the exact MIP reference vulnerable-sample counts across all perturbation radii $\epsilon \in [0.1, 0.6]$, validating exact QUBO formulation fidelity (the empirical misclassifications on Iris under $\epsilon = 0.5$ are visualized in Figure~\ref{fig:iris_vulnerability}, Appendix~\ref{appendix_numerical_results}). For Hardtanh, QUBO solutions closely track MIP ($60$--$61$ vs $66$ at $\epsilon=1.0$), outperforming incomplete relaxations. For Sigmoid networks, the 5-segment Step-Env enclosure matches MIP across nearly all budgets (with only a 1-sample QUBO-CIM deviation at $\epsilon=0.8$), establishing high empirical fidelity with minimal approximation loss. Detailed numerical breakdowns across all baselines are provided in Appendix~\ref{appendix_numerical_results}.

\begin{figure}[t]
\centering
\includegraphics[width=0.9\linewidth]{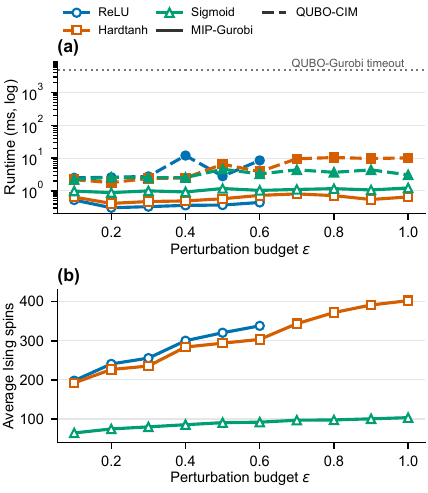}
\caption{Computational profile of global-QUBO formulations. (a) MIP-Gurobi vs QUBO-CIM (ms) and QUBO-Gurobi (timeout $\ge 5$s). (b) Average Ising-spin requirements vs $\epsilon$. Step-Env uses substantially fewer spins than exact PWL formulations.}
\label{fig:runtime_and_spin_scaling}
\end{figure}

\subsection{Computational Profile and Spin Complexity}

Figure~\ref{fig:runtime_and_spin_scaling} contrasts solver runtimes and Ising-spin budgets. While MIP-Gurobi completes in under $1.3$~ms on these shallow instances, QUBO-CIM solves the unconstrained binary instances in $1.8$--$12.0$~ms. In contrast, QUBO-Gurobi suffers severe bottlenecks, hitting the $5$-s timeout for nearly all $\epsilon \ge 0.2$. Spin requirements increase with $\epsilon$ as input bounds widen: ReLU requires $198$--$338$ average spins and Hardtanh requires $192$--$403$ spins. Remarkably, Sigmoid under Step-Env requires only $64$--$103$ spins across all budgets, demonstrating that piecewise-constant step enclosures significantly reduce hardware spin overhead compared to exact PWL linearizations.

\begin{figure}[t]
\centering
\includegraphics[width=0.9\linewidth]{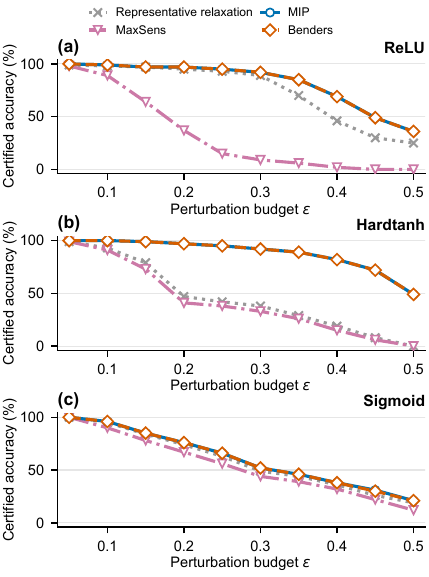}
\caption{Certified accuracy of the hybrid Benders verifier on \texttt{make\_moons} \cite{pedregosa2011scikit}. Benders coincides with exact MIP across all budgets for ReLU and Hardtanh, and differs by at most 1\% for Sigmoid.}
\label{fig:benders_certified_accuracy}
\end{figure}

\subsection{Scalable Certification with Benders Decomposition}

Figure~\ref{fig:benders_certified_accuracy} evaluates certified accuracy on the larger \texttt{make\_moons} network. For PWL activations (ReLU and Hardtanh), Benders decomposition identically matches the monolithic MIP certified accuracy across all 20 tested configurations, confirming exactness under decomposition. For Sigmoid networks (Step-Env), Benders certification tracks MIP within a maximum deviation of $\le 1\%$ (at $\epsilon=0.45$), whereas incomplete baselines (MaxSens, Duality, DLV) degrade rapidly at larger $\epsilon$. These results validate that the framework scales to larger architectures while preserving certification accuracy. Extended discussions on Benders cut convergence are in Appendix~\ref{appendix_benders}.

\subsection{Discussion}

Our experimental evaluation confirms that: (i) global QUBO encodings preserve exact verification semantics for PWL networks and tight approximations for nonlinear activations; (ii) physical Ising solvers handle unconstrained QUBO formulations more effectively than classical general-purpose QUBO solvers; and (iii) Benders decomposition scales certification to larger architectures without sacrificing accuracy. Detailed performance metrics and baseline comparisons are elaborated in Appendix~\ref{appendix_numerical_results}.  The central objective of this work is to establish the feasibility and correctness of Ising optimization based verification, rather than to claim immediate wall-clock dominance over highly optimized classical solvers. The modular, solver-agnostic design allows stronger quantum engines to be integrated as hardware and spin capacity improve, providing a path toward scalable robustness certification.

\section{Conclusion}
We presented an exact and asymptotically complete Ising-compatible framework for formal neural network robustness verification. Unifying state-optimal logarithmic encodings (\textbf{Log-PWL}) for PWL activations with sound step enclosures (\textbf{Step-Env}) for non-linear activations, our approach achieves logarithmic spin complexity per neuron and uniform bound convergence. To scale verification, our hybrid Benders solver cleanly decouples discrete activation choices from continuous bound propagation. Experiments on classical and Coherent Ising Machine (CIM) platforms validate exact PWL fidelity and tight non-linear accuracy under compact spin budgets.

\bibliographystyle{plainnat}
\bibliography{nnverify}

@INPROCEEDINGS{DigitalAnnealer,
  author={Matsubara, Satoshi and Takatsu, Motomu and Miyazawa, Toshiyuki and Shibasaki, Takayuki and Watanabe, Yasuhiro and Takemoto, Kazuya and Tamura, Hirotaka},
  booktitle={2020 25th Asia and South Pacific Design Automation Conference (ASP-DAC)}, 
  title={Digital Annealer for High-Speed Solving of Combinatorial optimization Problems and Its Applications}, 
  year={2020},
  pages={667-672}}

@article{Cai2020,
  title = {Power-efficient combinatorial optimization using intrinsic noise in memristor Hopfield neural networks},
  volume = {3},
  number = {7},
  journal = {Nature Electronics},
  author = {Cai,  Fuxi and Kumar,  Suhas and Van Vaerenbergh,  Thomas and Sheng,  Xia and Liu,  Rui and Li,  Can and Liu,  Zhan and Foltin,  Martin and Yu,  Shimeng and Xia,  Qiangfei and Yang,  J. Joshua and Beausoleil,  Raymond and Lu,  Wei D. and Strachan,  John Paul},
  year = {2020},
  pages = {409–418}
}

@ARTICLE{20k-SpinIsingChip,
  author={Yamaoka, Masanao and Yoshimura, Chihiro and Hayashi, Masato and Okuyama, Takuya and Aoki, Hidetaka and Mizuno, Hiroyuki},
  journal={IEEE Journal of Solid-State Circuits}, 
  title={A 20k-Spin Ising Chip to Solve Combinatorial Optimization Problems With CMOS Annealing}, 
  year={2016},
  volume={51},
  number={1},
  pages={303-309}}

@article{Goto2021,
  title = {High-performance combinatorial optimization based on classical mechanics},
  volume = {7},
  number = {6},
  journal = {Science Advances},
  author = {Goto,  Hayato and Endo,  Kotaro and Suzuki,  Masaru and Sakai,  Yoshisato and Kanao,  Taro and Hamakawa,  Yohei and Hidaka,  Ryo and Yamasaki,  Masaya and Tatsumura,  Kosuke},
  year = {2021}
}

@article{Goto2019,
  title = {Combinatorial optimization by simulating adiabatic bifurcations in nonlinear Hamiltonian systems},
  volume = {5},
  number = {4},
  journal = {Science Advances},
  author = {Goto,  Hayato and Tatsumura,  Kosuke and Dixon,  Alexander R.},
  year = {2019}
}

@article{Boixo2014,
  title = {Evidence for quantum annealing with more than one hundred qubits},
  volume = {10},
  number = {3},
  journal = {Nature Physics},
  publisher = {Springer Science and Business Media LLC},
  author = {Boixo,  Sergio and Rønnow,  Troels F. and Isakov,  Sergei V. and Wang,  Zhihui and Wecker,  David and Lidar,  Daniel A. and Martinis,  John M. and Troyer,  Matthias},
  year = {2014},
  pages = {218–224}
}

@article{johnson2011quantum,
  title = {Quantum annealing with manufactured spins},
  volume = {473},
  number = {7346},
  journal = {Nature},
  author = {Johnson,  M. W. and Amin,  M. H. S. and Gildert,  S. and Lanting,  T. and Hamze,  F. and Dickson,  N. and Harris,  R. and Berkley,  A. J. and Johansson,  J. and Bunyk,  P. and Chapple,  E. M. and Enderud,  C. and Hilton,  J. P. and Karimi,  K. and Ladizinsky,  E. and Ladizinsky,  N. and Oh,  T. and Perminov,  I. and Rich,  C. and Thom,  M. C. and Tolkacheva,  E. and Truncik,  C. J. S. and Uchaikin,  S. and Wang,  J. and Wilson,  B. and Rose,  G.},
  year = {2011},
  pages = {194–198}
}

@article{Honjo2021,
  title = {100, 000-spin coherent Ising machine},
  volume = {7},
  number = {40},
  journal = {Science Advances},
  author = {Honjo,  Toshimori and Sonobe,  Tomohiro and Inaba,  Kensuke and Inagaki,  Takahiro and Ikuta,  Takuya and Yamada,  Yasuhiro and Kazama,  Takushi and Enbutsu,  Koji and Umeki,  Takeshi and Kasahara,  Ryoichi and Kawarabayashi,  Ken-ichi and Takesue,  Hiroki},
  year = {2021}
}

@article{Marandi2014,
  title = {Network of time-multiplexed optical parametric oscillators as a coherent Ising machine},
  volume = {8},
  number = {12},
  journal = {Nature Photonics},
  author = {Marandi,  Alireza and Wang,  Zhe and Takata,  Kenta and Byer,  Robert L. and Yamamoto,  Yoshihisa},
  year = {2014},
  pages = {937–942}
}

@article{100spincim,
author = {Peter L. McMahon  and Alireza Marandi  and Yoshitaka Haribara  and Ryan Hamerly  and Carsten Langrock  and Shuhei Tamate  and Takahiro Inagaki  and Hiroki Takesue  and Shoko Utsunomiya  and Kazuyuki Aihara  and Robert L. Byer  and M. M. Fejer  and Hideo Mabuchi  and Yoshihisa Yamamoto },
title = {A fully programmable 100-spin coherent Ising machine with all-to-all connections},
journal = {Science},
volume = {354},
number = {6312},
pages = {614-617},
year = {2016}}

@article{2000nodecim,
author = {Takahiro Inagaki  and Yoshitaka Haribara  and Koji Igarashi  and Tomohiro Sonobe  and Shuhei Tamate  and Toshimori Honjo  and Alireza Marandi  and Peter L. McMahon  and Takeshi Umeki  and Koji Enbutsu  and Osamu Tadanaga  and Hirokazu Takenouchi  and Kazuyuki Aihara  and Ken-ichi Kawarabayashi  and Kyo Inoue  and Shoko Utsunomiya  and Hiroki Takesue },
title = {A coherent Ising machine for 2000-node optimization problems},
journal = {Science},
volume = {354},
number = {6312},
pages = {603-606},
year = {2016}}

@article{song2023training,
  title={Training Multi-layer Neural Networks on Ising Machine},
  author={Song, Xujie and Liu, Tong and Li, Shengbo Eben and Duan, Jingliang and Wang, Wenxuan and Li, Keqiang},
  journal={arXiv:2311.03408},
  year={2023}
}

@article{vadlamani2026scalable,
  title   = {Scalable Digital Compute-in-Memory Ising Machines for Robustness Verification of Binary Neural Networks},
  author  = {Vadlamani, Madhav and Singh, Rahul and Kong, Yuyao and Zhang, Zheng and Yu, Shimeng},
  journal = {arXiv preprint arXiv:2603.05677},
  year    = {2026}
}

@inproceedings{zhang2023qvip,
author = {Zhang, Yedi and Zhao, Zhe and Chen, Guangke and Song, Fu and Zhang, Min and Chen, Taolue and Sun, Jun},
title = {QVIP: An ILP-based Formal Verification Approach for Quantized Neural Networks},
year = {2023},
booktitle = {Proceedings of the 37th IEEE/ACM International Conference on Automated Software Engineering},
articleno = {82},
numpages = {13},
series = {ASE '22}
}

@article{singh2026robustness,
  title   = {Robustness Verification of Binary Neural Networks: An Ising and Quantum-Inspired Framework},
  author  = {Singh, Rahul and Saeedi, Seyran and Zhang, Zheng},
  journal = {arXiv preprint arXiv:2602.13536},
  year    = {2026}
}

@article{bastani2016measuring,
  title={Measuring neural net robustness with constraints},
  author={Bastani, Osbert and Ioannou, Yani and Lampropoulos, Leonidas and Vytiniotis, Dimitrios and Nori, Aditya and Criminisi, Antonio},
  journal={Advances in neural information processing systems},
  volume={29},
  year={2016}
}

@inproceedings{gehr2018ai2,
  title={Ai2: Safety and robustness certification of neural networks with abstract interpretation},
  author={Gehr, Timon and Mirman, Matthew and Drachsler-Cohen, Dana and Tsankov, Petar and Chaudhuri, Swarat and Vechev, Martin},
  booktitle={2018 IEEE symposium on security and privacy (SP)},
  pages={3--18},
  year={2018},
  organization={IEEE}
}

@inproceedings{huang2017safety,
  title={Safety verification of deep neural networks},
  author={Huang, Xiaowei and Kwiatkowska, Marta and Wang, Sen and Wu, Min},
  booktitle={International conference on computer aided verification},
  pages={3--29},
  year={2017},
  organization={Springer}
}

@article{xiang2017reachable,
  title={Reachable set computation and safety verification for neural networks with relu activations},
  author={Xiang, Weiming and Tran, Hoang-Dung and Johnson, Taylor T},
  journal={arXiv preprint arXiv:1712.08163},
  year={2017}
}

@inproceedings{weng2018towards,
  title={Towards fast computation of certified robustness for relu networks},
  author={Weng, Lily and Zhang, Huan and Chen, Hongge and Song, Zhao and Hsieh, Cho-Jui and Daniel, Luca and Boning, Duane and Dhillon, Inderjit},
  booktitle={International Conference on Machine Learning},
  pages={5276--5285},
  year={2018},
  organization={PMLR}
}

@article{xiang2018output,
  title={Output reachable set estimation and verification for multilayer neural networks},
  author={Xiang, Weiming and Tran, Hoang-Dung and Johnson, Taylor T},
  journal={IEEE transactions on neural networks and learning systems},
  volume={29},
  number={11},
  pages={5777--5783},
  year={2018},
  publisher={IEEE}
}

@inproceedings{wang2018formal,
  title={Formal security analysis of neural networks using symbolic intervals},
  author={Wang, Shiqi and Pei, Kexin and Whitehouse, Justin and Yang, Junfeng and Jana, Suman},
  booktitle={27th USENIX Security Symposium (USENIX Security 18)},
  pages={1599--1614},
  year={2018}
}

@inproceedings{dvijotham2018dual,
  title={A Dual Approach to Scalable Verification of Deep Networks.},
  author={Dvijotham, Krishnamurthy and Stanforth, Robert and Gowal, Sven and Mann, Timothy A and Kohli, Pushmeet},
  booktitle={UAI},
  volume={1},
  number={2},
  pages={3},
  year={2018}
}

@article{lomuscio2017approach,
  title={An approach to reachability analysis for feed-forward relu neural networks},
  author={Lomuscio, Alessio and Maganti, Lalit},
  journal={arXiv preprint arXiv:1706.07351},
  year={2017}
}

@article{dutta2017output,
  title={Output range analysis for deep neural networks},
  author={Dutta, Souradeep and Jha, Susmit and Sanakaranarayanan, Sriram and Tiwari, Ashish},
  journal={arXiv preprint arXiv:1709.09130},
  year={2017}
}

@article{pedregosa2011scikit,
  title={Scikit-learn: Machine learning in Python},
  author={Pedregosa, Fabian and Varoquaux, Ga{\"e}l and Gramfort, Alexandre and Michel, Vincent and Thirion, Bertrand and Grisel, Olivier and Blondel, Mathieu and Prettenhofer, Peter and Weiss, Ron and Dubourg, Vincent and others},
  journal={the Journal of machine Learning research},
  volume={12},
  pages={2825--2830},
  year={2011},
  publisher={JMLR. org}
}

@article{fisher1936use,
  title={The use of multiple measurements in taxonomic problems},
  author={Fisher, Ronald A},
  journal={Annals of eugenics},
  volume={7},
  number={2},
  pages={179--188},
  year={1936},
  publisher={Wiley Online Library}
}

@article{liu2021algorithms,
  title={Algorithms for verifying deep neural networks},
  author={Liu, Changliu and Arnon, Tomer and Lazarus, Christopher and Strong, Christopher and Barrett, Clark and Kochenderfer, Mykel J and others},
  journal={Foundations and Trends{\textregistered} in Optimization},
  volume={4},
  number={3-4},
  pages={244--404},
  year={2021},
  publisher={Now Publishers, Inc.}
}

@inproceedings{krizhevsky2012imagenet,
  title={ImageNet Classification with Deep Convolutional Neural Networks},
  author={Krizhevsky, Alex and Sutskever, Ilya and Hinton, Geoffrey E.},
  booktitle={Advances in Neural Information Processing Systems (NeurIPS)},
  year={2012}
}

@inproceedings{he2016deep,
  title={Deep Residual Learning for Image Recognition},
  author={He, Kaiming and Zhang, Xiangyu and Ren, Shaoqing and Sun, Jian},
  booktitle={Proceedings of the IEEE Conference on Computer Vision and Pattern Recognition (CVPR)},
  year={2016}
}

@inproceedings{vaswani2017attention,
  title={Attention Is All You Need},
  author={Vaswani, Ashish and others},
  booktitle={Advances in Neural Information Processing Systems (NeurIPS)},
  year={2017}
}

@inproceedings{devlin2019bert,
  title={Bert: Pre-training of deep bidirectional transformers for language understanding},
  author={Devlin, Jacob and Chang, Ming-Wei and Lee, Kenton and Toutanova, Kristina},
  booktitle={Proceedings of the 2019 conference of the North American chapter of the association for computational linguistics: human language technologies, volume 1 (long and short papers)},
  pages={4171--4186},
  year={2019}
}

@article{bojarski2016end,
  title={End to end learning for self-driving cars},
  author={Bojarski, Mariusz and Del Testa, Davide and Dworakowski, Daniel and Firner, Bernhard and Flepp, Beat and Goyal, Prasoon and Jackel, Lawrence D and Monfort, Mathew and Muller, Urs and Zhang, Jiakai and others},
  journal={arXiv preprint arXiv:1604.07316},
  year={2016}
}

@article{jumper2021highly,
  title={Highly accurate protein structure prediction with AlphaFold},
  author={Jumper, John and Evans, Richard and Pritzel, Alexander and Green, Tim and Figurnov, Michael and Ronneberger, Olaf and Tunyasuvunakool, Kathryn and Bates, Russ and {\v{Z}}{\'\i}dek, Augustin and Potapenko, Anna and others},
  journal={nature},
  volume={596},
  number={7873},
  pages={583--589},
  year={2021}
}

@inproceedings{eykholt2018robust,
  title={Robust physical-world attacks on deep learning visual classification},
  author={Eykholt, Kevin and Evtimov, Ivan and Fernandes, Earlence and Li, Bo and Rahmati, Amir and Xiao, Chaowei and Prakash, Atul and Kohno, Tadayoshi and Song, Dawn},
  booktitle={Proceedings of the IEEE conference on computer vision and pattern recognition},
  pages={1625--1634},
  year={2018}
}

@article{szegedy2013intriguing,
  title={Intriguing properties of neural networks},
  author={Szegedy, Christian and Zaremba, Wojciech and Sutskever, Ilya and Bruna, Joan and Erhan, Dumitru and Goodfellow, Ian and Fergus, Rob},
  journal={arXiv preprint arXiv:1312.6199},
  year={2013}
}

@article{goodfellow2014explaining,
  title={Explaining and harnessing adversarial examples},
  author={Goodfellow, Ian J and Shlens, Jonathon and Szegedy, Christian},
  journal={arXiv preprint arXiv:1412.6572},
  year={2014}
}

@inproceedings{madry2018towards,
  title={Towards Deep Learning Models Resistant to Adversarial Attacks},
  author={Madry, Aleksander and Makelov, Aleksandar and Schmidt, Ludwig and Tsipras, Dimitris and Vladu, Adrian},
  booktitle={International Conference on Learning Representations},
  year={2018}
}

@inproceedings{wong2018provable,
  title={Provable defenses against adversarial examples via the convex outer adversarial polytope},
  author={Wong, Eric and Kolter, Zico},
  booktitle={International conference on machine learning},
  pages={5286--5295},
  year={2018},
  organization={PMLR}
}

@inproceedings{cohen2019certified,
  title={Certified adversarial robustness via randomized smoothing},
  author={Cohen, Jeremy and Rosenfeld, Elan and Kolter, Zico},
  booktitle={international conference on machine learning},
  pages={1310--1320},
  year={2019},
  organization={PMLR}
}

@inproceedings{katz2017reluplex,
  title={Reluplex: An efficient SMT solver for verifying deep neural networks},
  author={Katz, Guy and Barrett, Clark and Dill, David L and Julian, Kyle and Kochenderfer, Mykel J},
  booktitle={International conference on computer aided verification},
  pages={97--117},
  year={2017},
  organization={Springer}
}

@inproceedings{tjeng2019evaluating,
  title={Evaluating Robustness of Neural Networks with Mixed Integer Programming},
  author={Tjeng, Vincent and Xiao, Kai Y and Tedrake, Russ},
  booktitle={International Conference on Learning Representations},
  year={2019}
}

@article{bunel2018unified,
  title={A unified view of piecewise linear neural network verification},
  author={Bunel, Rudy R and Turkaslan, Ilker and Torr, Philip and Kohli, Pushmeet and Mudigonda, Pawan K},
  journal={Advances in neural information processing systems},
  volume={31},
  year={2018}
}

@article{wang2018efficient,
  title={Efficient formal safety analysis of neural networks},
  author={Wang, Shiqi and Pei, Kexin and Whitehouse, Justin and Yang, Junfeng and Jana, Suman},
  journal={Advances in neural information processing systems},
  volume={31},
  year={2018}
}

@inproceedings{franco2022quantum,
  title={Quantum robustness verification: A hybrid quantum-classical neural network certification algorithm},
  author={Franco, Nicola and Wollschl{\"a}ger, Tom and Gao, Nicholas and Lorenz, Jeanette Miriam and G{\"u}nnemann, Stephan},
  booktitle={2022 IEEE International Conference on Quantum Computing and Engineering (QCE)},
  pages={142--153},
  year={2022}
}

@article{vielma2010how,
  title = {Mixed-Integer Models for Nonseparable Piecewise-Linear Optimization: Unifying Framework and Extensions},
  volume = {58},
  number = {2},
  journal = {Operations Research},
  author = {Vielma,  Juan Pablo and Ahmed,  Shabbir and Nemhauser,  George},
  year = {2010},
  pages = {303–315}
}

@inproceedings{zhang2018crown,
  title={Efficient Neural Network Robustness Certification with General Activation Functions},
  author={Zhang, Huan and Weng, Tsui-Wei and Chen, Pin-Yu and Hsieh, Cho-Jui and Daniel, Luca},
  booktitle={Advances in Neural Information Processing Systems},
  volume={31},
  year={2018}
}

@article{singh2019deeppoly,
  title={An Abstract Domain for Certifying Neural Networks},
  author={Singh, Gagandeep and Gehr, Timon and P{\"u}schel, Markus and Vechev, Martin},
  journal={Proceedings of the ACM on Programming Languages},
  volume={3},
  number={POPL},
  pages={1--30},
  year={2019}
}

@inproceedings{henriksen2020verinet,
  title={Efficient Neural Network Verification via Adaptive Refinement and Adversarial Search},
  author={Henriksen, Patrick and Lomuscio, Alessio},
  booktitle={European Conference on Artificial Intelligence},
  pages={2513--2520},
  year={2020}
}

@inproceedings{shi2025genbab,
author = {Shi, Zhouxing and Jin, Qirui and Kolter, Zico and Jana, Suman and Hsieh, Cho-Jui and Zhang, Huan},
title = {Neural Network Verification with Branch-and-Bound for General Nonlinearities},
year = {2025},
booktitle = {31st International Conference on Tools and Algorithms for the Construction and Analysis of Systems (TACAS)},
pages = {315–335},
numpages = {21}
}

@inproceedings{franco2023milp,
  title={Efficient MILP Decomposition in Quantum Computing for ReLU Network Robustness},
  author={Franco, Nicola and Wollschl{\"a}ger, Tom and Poggel, Benedikt and G{\"u}nnemann, Stephan and Lorenz, Jeanette Miriam},
  booktitle={2023 IEEE International Conference on Quantum Computing and Engineering (QCE)},
  pages={524--534},
  year={2023},
  organization={IEEE},
  doi={10.1109/QCE57702.2023.00066}
}

\clearpage
\appendix

\section{Illustration of Adversarial Vulnerability}\label{appendix_workflow}

\begin{figure*}[t]
    \centering
    \includegraphics[width=0.75\textwidth]{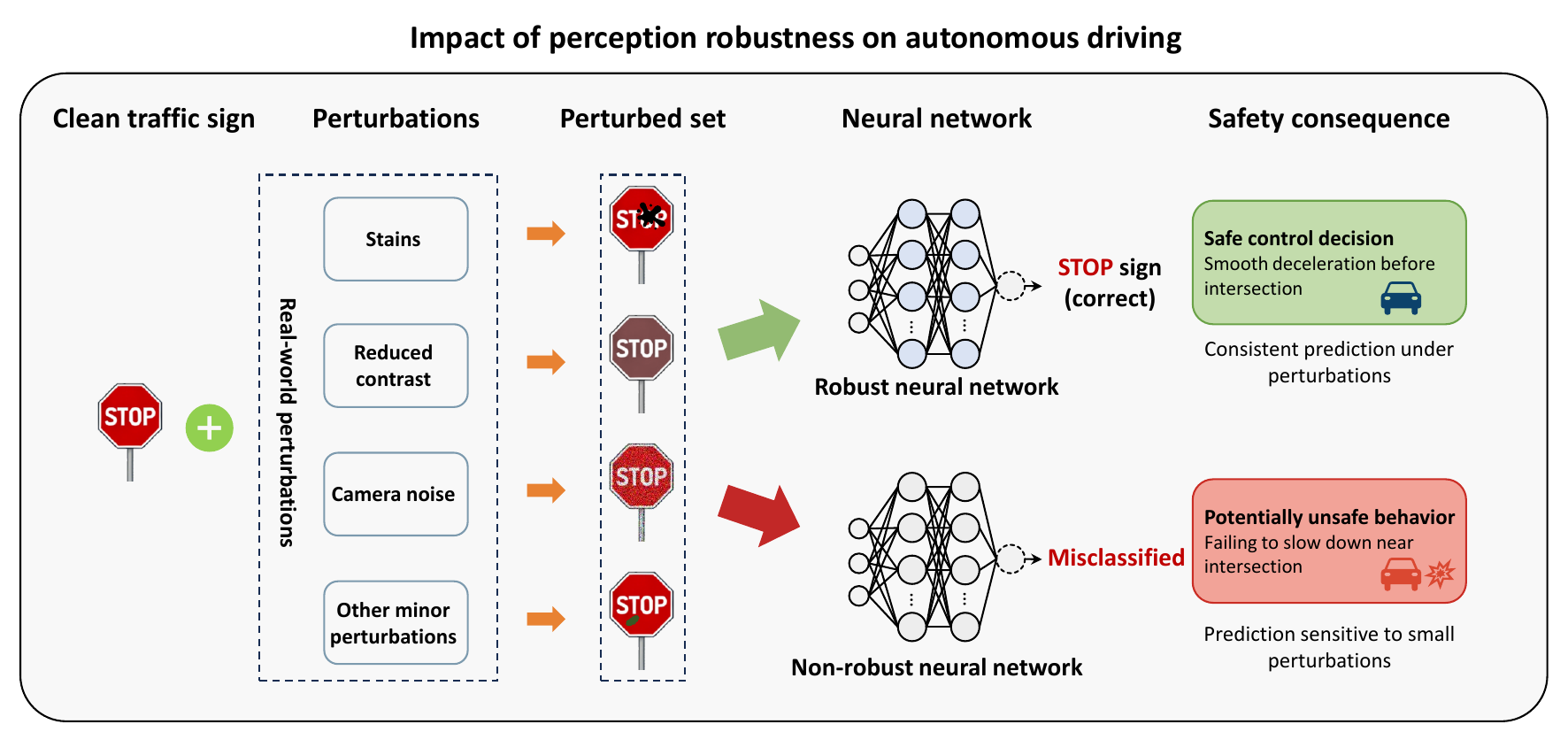} 
    \caption{Illustration of adversarial vulnerability in traffic sign recognition. Small, imperceptible perturbations can cause a neural network to misclassify a stop sign, highlighting the need for robust verification.} 
    \label{fig:traffic_sign} 
\end{figure*}

Figure~\ref{fig:traffic_sign} illustrates the physical adversarial vulnerability in traffic sign recognition, demonstrating how small input perturbations can cause a neural network to misclassify a stop sign in safety-critical applications. Providing formal guarantees against such vulnerabilities requires mathematically rigorous verification tools. 

\section{Related Work}\label{related_work}
We provide a review of literature across formal neural network verification, discrete optimization, and quantum/Ising computing, contextualizing how our proposed framework advances beyond existing paradigms.

\subsection{Classical and Quantized Network Verification}
Formal verification certifies prediction invariance within a bounded input region. Exact classical solvers employ Satisfiability Modulo Theories (SMT) \cite{katz2017reluplex}, Mixed-Integer Linear Programming (MILP) \cite{tjeng2019evaluating, bastani2016measuring}, and Branch-and-Bound (BaB) \cite{bunel2018unified} for piecewise-linear (PWL) networks. To scale to continuous non-linear activations (e.g., Sigmoid, Tanh), linear bound propagation (CROWN \cite{zhang2018crown}), abstract interpretation (DeepPoly \cite{singh2019deeppoly}), and Lagrangian duality \cite{dvijotham2018dual} compute sound outer-approximations. For discrete low-bit architectures, Zhang et al. \cite{zhang2023qvip} introduced QVIP, an Integer Linear Programming (ILP) framework that encodes fixed-point quantized weights and operations into ILP constraints, enabling exact verification on classical solvers.

Classical verification algorithms (SMT/MILP/LP/ILP) rely on von Neumann CPU/GPU architectures to solve continuous or integer linear constraints. They explore high-dimensional activation state spaces via sequential Branch-and-Bound tree search, facing severe computational bottlenecks as network depth and width increase, and their floating-point/integer representations cannot be mapped onto non-von Neumann parallel physical solvers. In contrast, our framework reformulates combinatorial verification directly into quadratic spin interactions ($\mathbf{H} = \mathbf{s}^\top \mathbf{Q} \mathbf{s}$) tailored for high-speed parallel physical Ising machines (e.g., CIMs). Furthermore, while classical MILP/ILP tools require linear binary variables per segment, our Exact Log-PWL model achieves an information-theoretically minimal logarithmic spin complexity ($\lceil \log_2 N \rceil$ spins per neuron), and our Step-Env model provides sound enclosures for general non-linear activations with provable asymptotic completeness.

\subsection{Ising and Quantum-Inspired Robustness Verification}
Specialized Ising hardware (e.g., Coherent Ising Machines, quantum annealers, compute-in-memory chips) solves Quadratic Unconstrained Boolean Optimization (QUBO) with high physical parallelism. Franco et al. \cite{franco2022quantum} first mapped network certification to hybrid QUBO subproblems. Recently, Singh, Saeedi, and Zhang \cite{singh2026robustness} proposed a Quadratic Constrained Boolean Optimization (QCBO) to QUBO transformation for Binary Neural Networks (BNNs), while Vadlamani et al. \cite{vadlamani2026scalable} designed an SRAM-based Digital Compute-in-Memory (DCIM) Ising machine implementing in-memory annealing for BNN robustness verification. Unlike the BNN verification schemes of Singh, Saeedi, and Zhang \cite{singh2026robustness} and Vadlamani et al. \cite{vadlamani2026scalable}, which rely strictly on discrete binary weights and activation states ($\{-1, +1\}$), our framework accommodates arbitrary continuous PWL activations with minimal logarithmic spin overhead and establishes uniform error bounds for continuous non-linear functions via Step-Env, bridging general deep neural networks with Ising computing hardware.

\subsection{Decomposition Methods in Quantum Optimization}
To scale verification beyond hardware spin capacity, decomposition methods separate discrete combinatorial decisions from continuous subproblems. Franco et al. \cite{franco2023milp} conducted a comparative study of Benders vs. Dantzig--Wolfe decomposition on quantum hardware, revealing that generic quantum Benders incurs exponential qubit growth because every infeasibility cut requires encoding continuous slack variables in the master problem. Consequently, they advocated Dantzig--Wolfe decomposition to restrict qubit usage.

\section{Details of the Conventional One-Hot MILP Formulation}\label{appendix_one_hot_milp}

In standard mixed-integer linear programming (MILP) verification of piecewise-linear (PWL) neural networks, the pre-activation domain $[v_0, v_n]$ of each neuron $j$ at layer $l$ is partitioned into $n$ segments $[v_{i-1}, v_i]$ for $i \in \{1, \dots, n\}$. Over each segment $i$, the PWL activation function $\sigma(\cdot)$ is represented by an affine mapping $\sigma(z) = \alpha_i z + \gamma_i$, where $\alpha_i$ is the segment slope and $\gamma_i$ is the intercept.

\subsection{Standard Bilinear One-Hot MILP (Big-M Formulation)}

To construct a globally sound MILP model using a direct one-hot encoding:
\begin{enumerate}
  \item \textbf{One-Hot Segment Selection and Pre-Activation Bounds}: We assign a binary indicator variable $\beta_{j,i}^l \in \{0, 1\}$ to each segment $i \in \{1, \dots, n\}$. The active segment is selected via the uniqueness equality:
  \begin{equation}
  \sum_{i=1}^n \beta_{j,i}^l = 1.
  \end{equation}
  The pre-activation $z_j^l$ is restricted to the active segment $[v_{i-1}, v_i]$ via the segment bounding inequalities:
  \begin{equation}
  \sum_{i=1}^n v_{i-1} \beta_{j,i}^l \le z_j^l \le \sum_{i=1}^n v_i \beta_{j,i}^l.
  \end{equation}

  \item \textbf{Auxiliary Gated Pre-Activations $u_{j,i}^l$ and Big-M Linearization}: The output activation $a^l_j = \sum_{i=1}^n \beta_{j,i}^l (\alpha_i z^l_j + \gamma_i)$ contains bilinear terms $z^l_j \beta_{j,i}^l$. We introduce continuous auxiliary variables $u_{j,i}^l \triangleq z^l_j \beta_{j,i}^l$. Given pre-activation bounds $z_j^l \in [L_j^l, U_j^l]$, $u_{j,i}^l$ is linearized via standard Big-M inequalities:
  \begin{equation}
  \begin{aligned}
  z_j^l - U_j^l (1 - \beta_{j,i}^l) &\le u_{j,i}^l \le z_j^l - L_j^l (1 - \beta_{j,i}^l), \\
  L_j^l \beta_{j,i}^l &\le u_{j,i}^l \le U_j^l \beta_{j,i}^l.
  \end{aligned}
  \end{equation}
  When $\beta_{j,i}^l = 1$, these force $u_{j,i}^l = z_j^l$; when $\beta_{j,i}^l = 0$, they force $u_{j,i}^l = 0$.

  \item \textbf{Linear Activation Reconstruction}: Utilizing $u_{j,i}^l$, the output activation $a^l_j$ is expressed as an exact linear combination:
  \begin{equation}
  a^l_j = \sum_{i=1}^n \left( \alpha_i u_{j,i}^l + \gamma_i \beta_{j,i}^l \right).
  \end{equation}

  \item \textbf{Affine Feedforward Layer Coupling}: Pre-activations $z^l_j$ are coupled to previous layer activations via:
  \begin{equation}
  z^l_j = \sum_{k=1}^{n_{l-1}} w_{jk}^l a_k^{l-1} + b_j^l.
  \end{equation}
\end{enumerate}

Collecting all continuous states into $\mathbf{y} \triangleq [\mathbf{x}^\top, (\mathbf{a}^1)^\top, (\mathbf{z}^1)^\top, (\mathbf{u}^1)^\top, \dots, (\mathbf{a}^L)^\top, (\mathbf{z}^L)^\top, (\mathbf{u}^L)^\top ]^\top \in \mathbb{R}^{N_{\text{cont}}}$ and binary choices into $\bm{\beta} \in \{0, 1\}^{N_{\text{bin}}}$, the full network constraints compile into global linear systems $\mathbf{A}_{\text{eq}} \mathbf{y} = \mathbf{b}_0 + \mathbf{B}_{\text{eq}} \bm{\beta}$ and $\mathbf{C} \mathbf{y} \leq \mathbf{d}_0 + \mathbf{D} \bm{\beta}$.

\subsection{Big-M-Free and Auxiliary-Free One-Hot Formulations}
A natural theoretical question is whether one-hot binary encodings ($\sum_{i=1}^n \beta_{j,i}^l = 1$) can completely avoid continuous auxiliary product variables $u_{j,i}^l$ and global Big-M relaxation bounds $[L_j^l, U_j^l]$. In disjunctive programming, two classical modeling paradigms achieve this:

\paragraph{1. Disjunctive Multiple-Choice (MC) One-Hot Model.}
Instead of maintaining a single global pre-activation variable $z_j^l$ multiplied by binary variables $\beta_{j,i}^l$, the pre-activation $z_j^l$ and activation $a_j^l$ are decomposed into local segment-restricted continuous variables $z_{j,i}^l$ and $a_{j,i}^l$:
\begin{equation}
z_j^l = \sum_{i=1}^n z_{j,i}^l, \qquad a_j^l = \sum_{i=1}^n a_{j,i}^l.
\end{equation}
Each local pre-activation $z_{j,i}^l$ is constrained directly within its local segment bounds using the segment endpoints $v_{i-1}$ and $v_i$:
\begin{equation}
v_{i-1} \beta_{j,i}^l \le z_{j,i}^l \le v_i \beta_{j,i}^l, \quad \forall i \in \{1, \dots, n\}.
\end{equation}
The local activation $a_{j,i}^l$ is then defined by the exact linear mapping:
\begin{equation}
a_{j,i}^l = \alpha_i z_{j,i}^l + \gamma_i \beta_{j,i}^l.
\end{equation}
When segment $\beta_{j,i}^l = 0$, the bounds force local states $z_{j,i}^l = 0$ and $a_{j,i}^l = 0$; when $\beta_{j,i}^l = 1$, $z_{j,i}^l \in [v_{i-1}, v_i]$ and $a_{j,i}^l = \alpha_i z_{j,i}^l + \gamma_i$. The key advantage of the MC model is that it completely eliminates auxiliary gated variables $u_{j,i}^l \triangleq z_j^l \beta_{j,i}^l$ and requires \emph{no global Big-M bounds} $[L_j^l, U_j^l]$, utilizing only the fixed local segment grid endpoints $v_{i-1}$ and $v_i$, thereby forming the exact convex hull relaxation of the piecewise-linear graph over one-hot variables.

\paragraph{2. Convex Combination (CC / SOS2) One-Hot Model.}
In the convex combination formulation, pre-activation $z_j^l$ and activation $a_j^l$ are expressed as convex combinations of the grid points $v_0 < v_1 < \dots < v_n$ using continuous grid point weights $\lambda_{j,p}^l \ge 0$ ($p \in \{0, \dots, n\}$) satisfying $\sum_{p=0}^n \lambda_{j,p}^l = 1$:
\begin{equation}
z_j^l = \sum_{p=0}^n \lambda_{j,p}^l v_p, \qquad a_j^l = \sum_{p=0}^n \lambda_{j,p}^l \sigma(v_p).
\end{equation}
The continuous grid weights $\lambda_{j,p}^l$ are coupled to the one-hot segment binary indicators $\beta_{j,i}^l \in \{0, 1\}$ ($\sum_{i=1}^n \beta_{j,i}^l = 1$) via purely linear inequalities:
\begin{equation}
\begin{aligned}
\lambda_{j,0}^l &\le \beta_{j,1}^l, \\
\lambda_{j,p}^l &\le \beta_{j,p}^l + \beta_{j,p+1}^l \quad (\forall p \in \{1, \dots, n-1\}), \\
\lambda_{j,n}^l &\le \beta_{j,n}^l.
\end{aligned}
\end{equation}
Like the MC model, the key advantage of the CC model is that it operates entirely without continuous-binary products $z_j^l \beta_{j,i}^l$, dispenses with auxiliary gated variables $u_{j,i}^l$, and requires no Big-M relaxations, with all coupling constraints between continuous convex weights and binary segment selection indicators being purely linear.

\subsection{Comparative Summary across PWL Verification Encodings}
Table~\ref{tab:pwl_encoding_comparison} summarizes the structural properties, variable requirements, and relaxation tightness of different piecewise-linear network encodings.

\begin{table*}[t]
\centering
\resizebox{\textwidth}{!}{
\begin{tabular}{lcccccc}
\toprule
\textbf{Model Formulation} & \textbf{Binary Variables / Neuron} & \textbf{Constraints / Neuron} & \textbf{Aux. Vars $u_i$ Needed?} & \textbf{Big-M Bounds Needed?} & \textbf{Coupling Constraints} & \textbf{LP Relaxation Tightness}\\
\midrule
Standard One-Hot (Big-M)  & $n$ (Linear)             & $\mathcal{O}(n)$ ($4n+3$)     & Yes ($n$ continuous)           & Yes ($[L_j, U_j]$)            & Bilinear Big-M                 & Weak (bounds dependent)\\
Multiple-Choice (MC)      & $n$ (Linear)             & $\mathcal{O}(n)$ ($3n+3$)     & No                             & No                            & Local Linear Disjunctive       & Tight (Convex Hull)\\
Convex Combination (CC)   & $n$ (Linear)             & $\mathcal{O}(n)$ ($n+4$)      & No                             & No                            & Linear Grid Weight Coupling    & Tight (Convex Hull)\\
\textbf{Exact Log-PWL} & $\lceil \log_2 n \rceil$ (\textbf{Logarithmic}) & $\mathcal{O}(\log n)$ ($2\lceil \log_2 n \rceil$) & \textbf{No} & \textbf{No} & \textbf{Linear Vielma--Nemhauser} & \textbf{Tight (Convex Hull)}\\
\bottomrule
\end{tabular}
}
\caption{Comparison of MILP formulations for piecewise-linear verification with $n$ segments per neuron.}
\label{tab:pwl_encoding_comparison}
\end{table*}

As shown in Table~\ref{tab:pwl_encoding_comparison}, while the MC and CC One-Hot formulations eliminate auxiliary variables $u_{j,i}^l$ and Big-M bounds, they still require $n$ one-hot binary variables per neuron ($\mathcal{O}(n)$ spin complexity). In contrast, our proposed \textbf{Exact Log-PWL} model achieves both advantages simultaneously: it completely eliminates auxiliary variables $u_{j,i}^l$ and Big-M relaxations via linear Vielma--Nemhauser coupling constraints, while compressing binary decisions from $n$ one-hot indicators down to an information-theoretically minimal logarithmic spin complexity ($\lceil \log_2 n \rceil$ binary variables per neuron).

\section{Theoretical Properties and Proofs for Step-Env Model}\label{appendix_step_env_theory}

In this section, we present the formal theoretical statements and complete proofs for the Asymptotic Step-Envelope (Step-Env) model. We first establish a general layerwise error propagation lemma that provides a unified mathematical foundation for bound accumulation across deep neural networks.

\subsection{General Layerwise Error Propagation Lemma}\label{subsec:general_propagation_lemma}

\begin{lemma}[General Layerwise Error Propagation in Deep Networks]\label{lem:general_layerwise_error}
Consider two $L$-layer feedforward composite maps $F \triangleq T_L \circ \dots \circ T_1$ and $F' \triangleq T_L' \circ \dots \circ T_1'$ from $\mathbb{R}^{n_0}$ to $\mathbb{R}^{n_L}$. Suppose that for each layer $l \in \{1, \dots, L\}$:
\begin{enumerate}
  \item The layer map $T_l'$ is Lipschitz continuous with constant $\alpha_l \ge 0$ under the $\ell_\infty$-norm, i.e., $\|T_l'(\mathbf{u}) - T_l'(\mathbf{v})\|_\infty \le \alpha_l \|\mathbf{u} - \mathbf{v}\|_\infty$;
  \item The local layerwise discrepancy is uniformly bounded by $\|T_l(\mathbf{u}) - T_l'(\mathbf{u})\|_\infty \le \delta_l$ for all reachable inputs $\mathbf{u}$.
\end{enumerate}
Then for identical initial inputs $\mathbf{h}_0 = \mathbf{h}_0'$, the accumulated end-to-end output error vector $\mathbf{e}_L \triangleq F(\mathbf{h}_0) - F'(\mathbf{h}_0)$ satisfies:
\begin{equation}\label{eq:general_layerwise_propagation}
\|\mathbf{e}_L\|_\infty \le \sum_{l=1}^L \left( \prod_{r=l+1}^L \alpha_r \right) \delta_l.
\end{equation}
\end{lemma}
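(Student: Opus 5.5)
The plan is a telescoping argument over the layer index, proved by induction on $L$. Define the two trajectories $\mathbf{h}_l \triangleq T_l(\mathbf{h}_{l-1})$ and $\mathbf{h}_l' \triangleq T_l'(\mathbf{h}_{l-1}')$ with $\mathbf{h}_0 = \mathbf{h}_0'$, and the layerwise error $\mathbf{e}_l \triangleq \mathbf{h}_l - \mathbf{h}_l'$, so that $\mathbf{e}_0 = \mathbf{0}$ and $\mathbf{e}_L = F(\mathbf{h}_0) - F'(\mathbf{h}_0)$.

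The key step is the one-step recursion. Insert and subtract $T_l'(\mathbf{h}_{l-1})$:
\begin{equation*}
\mathbf{e}_l = \bigl(T_l(\mathbf{h}_{l-1}) - T_l'(\mathbf{h}_{l-1})\bigr) + \bigl(T_l'(\mathbf{h}_{l-1}) - T_l'(\mathbf{h}_{l-1}')\bigr).
\end{equation*}
The first bracket is a local discrepancy evaluated at the exact trajectory point $\mathbf{h}_{l-1}$. Since $\mathbf{h}_{l-1}$ is reachable, hypothesis~2 bounds it by $\delta_l$. The second bracket is a difference of the \emph{same} map $T_l'$ at two points, so hypothesis~1 bounds it by $\alpha_l \|\mathbf{e}_{l-1}\|_\infty$. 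The triangle inequality in $\ell_\infty$ then gives
\begin{equation*}
\|\mathbf{e}_l\|_\infty \le \alpha_l \|\mathbf{e}_{l-1}\|_\infty + \delta_l.
\end{equation*}

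Unrolling this linear recursion from $\|\mathbf{e}_0\|_\infty = 0$ gives the claim. Formally, I would show by induction on $l$ that
\begin{equation*}
\|\mathbf{e}_l\|_\infty \le \sum_{i=1}^{l} \Bigl(\prod_{r=i+1}^{l} \alpha_r\Bigr) \delta_i,
\end{equation*}
with the empty product equal to $1$. The base case $l=1$ reads $\|\mathbf{e}_1\|_\infty \le \delta_1$. For the inductive step, multiply the hypothesis by $\alpha_{l+1} \ge 0$ (nonnegativity preserves the inequality) and add $\delta_{l+1}$. Each product then gains the factor $\alpha_{l+1}$, and the new term $\delta_{l+1}$ carries the empty product. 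Setting $l = L$ yields \eqref{eq:general_layerwise_propagation}.

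The only delicate point is the choice of decomposition: which point the discrepancy is evaluated at, and which map carries the Lipschitz constant. Adding and subtracting $T_l'(\mathbf{h}_{l-1})$ puts the Lipschitz requirement on the primed maps, matching hypothesis~1. It also evaluates the discrepancy only at points of the unprimed trajectory. I will state explicitly that "reachable inputs" in hypothesis~2 covers the points $\mathbf{h}_{l-1}$ for $\mathbf{h}_0$ in the input domain, since that is the only place the hypothesis is used. No Lipschitz assumption on the $T_l$ is needed.
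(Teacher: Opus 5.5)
Your proposal is correct and matches the paper's proof: the same insertion of $T_l'(\mathbf{h}_{l-1})$, the same one-step recursion $\|\mathbf{e}_l\|_\infty \le \delta_l + \alpha_l \|\mathbf{e}_{l-1}\|_\infty$, and the same unrolling from $\mathbf{e}_0 = \mathbf{0}$. Your explicit induction (using $\alpha_{l+1} \ge 0$) and your remark that hypothesis~2 is needed only along the unprimed trajectory make precise what the paper leaves implicit.
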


\begin{proof}
Let $\mathbf{h}_l \triangleq T_l(\mathbf{h}_{l-1})$ and $\mathbf{h}_l' \triangleq T_l'(\mathbf{h}_{l-1}')$ denote the intermediate activation vectors at layer $l$, with error vector $\mathbf{e}_l \triangleq \mathbf{h}_l - \mathbf{h}_l'$ and initial condition $\mathbf{e}_0 = \mathbf{0}$. For any layer $l \in \{1, \dots, L\}$, decomposing the error yields:
\begin{align*}
\mathbf{e}_l &= T_l(\mathbf{h}_{l-1}) - T_l'(\mathbf{h}_{l-1}') \\
&= \underbrace{T_l(\mathbf{h}_{l-1}) - T_l'(\mathbf{h}_{l-1})}_{\text{local injected discrepancy}} + \underbrace{T_l'(\mathbf{h}_{l-1}) - T_l'(\mathbf{h}_{l-1}')}_{\text{propagated error}}.
\end{align*}
Taking the $\ell_\infty$-norm and applying the triangle inequality and Lipschitz property of $T_l'$ gives:
\begin{align*}
\|\mathbf{e}_l\|_\infty &\le \|T_l(\mathbf{h}_{l-1}) - T_l'(\mathbf{h}_{l-1})\|_\infty + \|T_l'(\mathbf{h}_{l-1}) - T_l'(\mathbf{h}_{l-1}')\|_\infty \\
&\le \delta_l + \alpha_l \|\mathbf{e}_{l-1}\|_\infty.
\end{align*}
Unrolling this first-order linear recursion from $l=1$ to $L$ with $\|\mathbf{e}_0\|_\infty = 0$ yields Eq.~\eqref{eq:general_layerwise_propagation}.
\end{proof}

\subsection{Proof of Lemma~\ref{lem:soundness} (Soundness of Step-Envelope Over-Approximation)}

\begin{proof}[Proof of Lemma~\ref{lem:soundness}]
By construction of the piecewise-constant lower and upper step functions \(\underline{\sigma}(z)\) and \(\bar{\sigma}(z)\), for any pre-activation \(z \in [v_{i-1}, v_i]\), we have \(\underline{\sigma}(z) = \min_{s \in [v_{i-1}, v_i]} \sigma(s) \le \sigma(z) \le \max_{s \in [v_{i-1}, v_i]} \sigma(s) = \bar{\sigma}(z)\). Inductively applying the sign-aware interval arithmetic over the network layers preserves the continuous trajectory inclusions \(\underline{\mathbf{a}}^l \le \mathbf{a}^l(\mathbf{x}) \le \bar{\mathbf{a}}^l\) for all \(l \in \{1, \dots, L\}\). Thus, any exact feasible trajectory \((\mathbf{x}, \mathbf{z}^1, \mathbf{a}^1, \dots, \mathbf{z}^L, \mathbf{a}^L)\) belongs to \(\mathcal{Y}_{\text{approx}}\).
\end{proof}

\subsection{Proof of Theorem~\ref{thm:convergence} (Uniform Convergence of Optimal Bounds)}

\begin{proof}[Proof of Theorem~\ref{thm:convergence}]
Fix an arbitrary input \(\mathbf x\in\mathcal X\). Let \((\mathbf z^l(\mathbf x),\mathbf a^l(\mathbf x))_{l=1}^L\) denote the exact activation trajectory of the neural network initialized by \(\mathbf a^0(\mathbf x) = \mathbf x\), where \(\mathbf z^l(\mathbf x) = \mathbf{W}^l \mathbf a^{l-1}(\mathbf x) + \mathbf b^l\) and \(\mathbf a^l(\mathbf x) = \sigma_l(\mathbf z^l(\mathbf x))\). Let \((\underline{\mathbf z}^l, \bar{\mathbf z}^l, \underline{\mathbf a}^l, \bar{\mathbf a}^l)_{l=1}^L\) denote any feasible pair of Step-Env lower and upper envelope trajectories sharing the same input \(\underline{\mathbf a}^0 = \bar{\mathbf a}^0 = \mathbf x\).

\paragraph{Step 1: Segment Envelope Gap.}
By Lipschitz continuity of \(\sigma_l\) with constant \(L_{\sigma_l}\), for any pre-activation segment \([v_{l,i-1}, v_{l,i}]\) of width \(\Delta_{l,i} \le \Delta_l \le \Delta_{\max}\), the lower and upper step envelopes \(\underline{\sigma}_l(z) \triangleq \min_{s \in [v_{l,i-1}, v_{l,i}]} \sigma_l(s)\) and \(\bar{\sigma}_l(z) \triangleq \max_{s \in [v_{l,i-1}, v_{l,i}]} \sigma_l(s)\) satisfy
\begin{equation}\label{eq:segment_gap}
0 \le \bar{\sigma}_l(z) - \underline{\sigma}_l(z) \le L_{\sigma_l} \Delta_l, \quad \forall \, z \in [v_{l,i-1}, v_{l,i}].
\end{equation}
By Lemma~\ref{lem:soundness}, the exact activation satisfies \(\underline{\sigma}_l(z) \le \sigma_l(z) \le \bar{\sigma}_l(z)\).

\paragraph{Step 2: Envelope Error Bound via Lemma~\ref{lem:general_layerwise_error}.}
Define the layerwise envelope error vector \(\mathbf{e}^l \triangleq \bar{\mathbf a}^l - \underline{\mathbf a}^l \ge \mathbf 0\). For each layer $l$, the upper and lower step-envelope maps correspond to composite layer transformations with Lipschitz gain $\alpha_l = L_{\sigma_l} \|\mathbf{W}^l\|_{\infty\to\infty}$ and local injection error $\delta_l = L_{\sigma_l} \Delta_l \le L_{\sigma_l} \Delta_{\max}$. Applying Lemma~\ref{lem:general_layerwise_error} directly unrolls the layerwise error recursion to bound the end-to-end envelope error vector $\mathbf{e}^L \triangleq \bar{\mathbf a}^L - \underline{\mathbf a}^L$:
\begin{equation}\label{eq:unrolled_error}
\|\mathbf{e}^L\|_\infty \le \sum_{l=1}^L \left( \prod_{r=l+1}^L L_{\sigma_r} \|\mathbf{W}^r\|_{\infty\to\infty} \right) L_{\sigma_l} \Delta_l \le C_{\text{net}} \Delta_{\max},
\end{equation}
where \(C_{\text{net}} \triangleq \sum_{l=1}^L \left( \prod_{r=l+1}^L L_{\sigma_r} \|\mathbf{W}^r\|_{\infty\to\infty} \right) L_{\sigma_l} > 0\) is a finite network-dependent constant.

\paragraph{Step 3: Bound Inclusions and Convergence.}
Lemma~\ref{lem:soundness} establishes \(\underline{\mathbf a}^L(\mathbf x) \le \mathbf f(\mathbf x) \le \bar{\mathbf a}^L(\mathbf x)\) for all \(\mathbf x \in \mathcal X\). For output coordinate \(j\):
\begin{itemize}
  \item \textbf{Lower bound}: The Step-Env lower bound \(\underline{f}_{j,\bm{\Delta}} \triangleq \min_{\mathbf x \in \mathcal X} \underline{a}^L_j(\mathbf x)\) satisfies \(\underline{f}_{j,\bm{\Delta}} \le \min_{\mathbf x \in \mathcal X} f_j(\mathbf x) = f_j^-\). Furthermore, \(f_j(\mathbf x) - \underline{a}^L_j(\mathbf x) \le \bar{a}^L_j(\mathbf x) - \underline{a}^L_j(\mathbf x) \le \|\mathbf{e}^L\|_\infty \le C_{\text{net}} \Delta_{\max}\). Minimizing over \(\mathcal X\) yields \(f_j^- - \underline{f}_{j,\bm{\Delta}} \le C_{\text{net}} \Delta_{\max}\), proving \(0 \le f_j^- - \underline{f}_{j,\bm{\Delta}} \le C_{\text{net}} \Delta_{\max}\).
  \item \textbf{Upper bound}: Symmetrically, the upper bound \(\bar{f}_{j,\bm{\Delta}} \triangleq \max_{\mathbf x \in \mathcal X} \bar{a}^L_j(\mathbf x)\) satisfies \(f_j^+ \le \bar{f}_{j,\bm{\Delta}}\) and \(\bar{a}^L_j(\mathbf x) - f_j(\mathbf x) \le C_{\text{net}} \Delta_{\max}\), yielding \(0 \le \bar{f}_{j,\bm{\Delta}} - f_j^+ \le C_{\text{net}} \Delta_{\max}\).
\end{itemize}
Taking \(\Delta_{\max} \to 0\), the Squeeze Theorem yields \(\lim_{\Delta_{\max}\to 0}\underline{f}_{j,\bm{\Delta}}=f_j^-\) and \(\lim_{\Delta_{\max}\to 0}\bar{f}_{j,\bm{\Delta}}=f_j^+\), completing the proof.
\end{proof}

\subsection{Asymptotic Completeness for Strict Margins}

\begin{corollary}[Asymptotic Completeness for Strict Margins]\label{cor:asymptotic_complete}
Let \(m^\star\triangleq\min_{\mathbf x\in\mathcal X}\bigl(f_{y_{\mathrm{true}}}(\mathbf x)-f_p(\mathbf x)\bigr)>0\). Under the assumptions of Theorem~\ref{thm:convergence}, there exists a finite partition resolution \(\Delta_{\max} < m^\star / (2C_{\text{net}})\) such that the Step-Env lower margin is strictly positive, certifying robustness against class \(p\).
\end{corollary}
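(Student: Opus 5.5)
The plan is to reduce the margin to a single output coordinate and apply Theorem~\ref{thm:convergence} to it. I would first absorb the last affine layer into a scalar ``margin network'' $g(\mathbf x)\triangleq f_{y_{\mathrm{true}}}(\mathbf x)-f_p(\mathbf x)=(\mathbf e_{\mathrm{true}}-\mathbf e_p)^\top \mathbf a^L(\mathbf x)$. The Step-Env lower margin is the optimum of $\min_{\mathbf x\in\mathcal X}\bigl(\underline a^L_{y_{\mathrm{true}}}(\mathbf x)-\bar a^L_p(\mathbf x)\bigr)$. This is sound, because Lemma~\ref{lem:soundness} gives $\underline{\mathbf a}^L\le \mathbf f(\mathbf x)\le \bar{\mathbf a}^L$ coordinatewise. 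Hence the Step-Env lower margin is at most $m^\star$.

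The key quantitative step is a pointwise bound for every $\mathbf x\in\mathcal X$ and every feasible envelope trajectory sharing that input:
\begin{equation*}
g(\mathbf x)-\bigl(\underline a^L_{y_{\mathrm{true}}}-\bar a^L_p\bigr)=\bigl(f_{y_{\mathrm{true}}}-\underline a^L_{y_{\mathrm{true}}}\bigr)+\bigl(\bar a^L_p-f_p\bigr)\le 2\|\mathbf e^L\|_\infty\le 2C_{\text{net}}\Delta_{\max}.
\end{equation*}
Each bracket is controlled by $\bar a^L-\underline a^L$, as in Step~3 of the proof of Theorem~\ref{thm:convergence}. The envelope error bound \eqref{eq:unrolled_error}, obtained there via Lemma~\ref{lem:general_layerwise_error}, then gives the final inequality. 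The factor $2$ explains the threshold $m^\star/(2C_{\text{net}})$.

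I would then minimize over $\mathcal X$ and over all feasible segment choices. This gives Step-Env lower margin $\ge m^\star-2C_{\text{net}}\Delta_{\max}$. Choosing any partition with $\Delta_{\max}<m^\star/(2C_{\text{net}})$, for example uniform segments of width $\Delta_{\max}$ over the compact reachable domain, makes this strictly positive. Positivity of the Step-Env margin certifies robustness against class $p$ by Lemma~\ref{lem:soundness}. Such a partition has finitely many segments because the reachable pre-activation domains are compact, since $\mathcal X$ is compact and $\sigma_l$ is continuous.

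The main obstacle is that the minimization is joint over $\mathbf x$ and the binary segment indicators. The lower and upper envelopes of different coordinates may select segments independently, and the argument above needs the envelope error bound to hold uniformly over \emph{every} feasible trajectory, not only the canonical one. I would handle this as in Theorem~\ref{thm:convergence}. Fix an arbitrary feasible trajectory tied to a common $\mathbf x$. Each selected segment must contain the interval-propagated pre-activation, so the per-neuron gap stays $\le L_{\sigma_l}\Delta_l$. This yields the uniform bound before taking the minimum. If that uniformity fails, I would fall back on restricting the claim to the envelope trajectories induced by the true pre-activations, which is where Theorem~\ref{thm:convergence}'s constant was derived.
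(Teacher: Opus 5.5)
Your argument is correct, and it differs from the paper's in a way that matters. The paper uses only the statement of Theorem~\ref{thm:convergence}. It subtracts the two separately optimized coordinate bounds, $\underline f_{y_{\mathrm{true}},\bm{\Delta}}-\bar f_{p,\bm{\Delta}}\ge(f^-_{y_{\mathrm{true}}}-C_{\text{net}}\Delta_{\max})-(f^+_p+C_{\text{net}}\Delta_{\max})$, and then identifies $f^-_{y_{\mathrm{true}}}-f^+_p$ with $m^\star$. That identity fails in general: $f^-_{y_{\mathrm{true}}}-f^+_p=\min_{\mathbf x,\mathbf x'\in\mathcal X}\bigl(f_{y_{\mathrm{true}}}(\mathbf x)-f_p(\mathbf x')\bigr)\le m^\star$, and the inequality is usually strict. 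For instance, outputs $x+1$ and $x$ on $\mathcal X=[0,2]$ give $m^\star=1$ but $f^-_{y_{\mathrm{true}}}-f^+_p=-1$, so the decoupled bound can stay negative as $\Delta_{\max}\to0$. You instead keep $\underline{\mathbf a}^L$ and $\bar{\mathbf a}^L$ tied to a common $\mathbf x$, bound the gap to $g(\mathbf x)$ pointwise by $2\|\mathbf e^L\|_\infty$, and only then minimize. This gives exactly $m^\star-2C_{\text{net}}\Delta_{\max}$ and matches the coupled objective that Step-Env actually optimizes. The price is that the theorem's statement no longer suffices. You need the trajectory-wise bound \eqref{eq:unrolled_error} for every feasible trajectory with a shared input, which is the crux you correctly isolate.

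On that crux, your claim that ``the per-neuron gap stays $\le L_{\sigma_l}\Delta_l$'' holds only at layer $1$, where $\underline z^1_j=\bar z^1_j$. For $l\ge2$, interval propagation gives $\bar z^l_j-\underline z^l_j=\sum_k|w^l_{jk}|e^{l-1}_k$. Write $i(z)$ for the selected segment containing $z$, using the segment-membership constraints you assume. Then $\bar\gamma_{i(\bar z^l_j)}\le\sigma_l(\bar z^l_j)+L_{\sigma_l}\Delta_l$ and $\underline\gamma_{i(\underline z^l_j)}\ge\sigma_l(\underline z^l_j)-L_{\sigma_l}\Delta_l$, so $e^l_j\le L_{\sigma_l}\sum_k|w^l_{jk}|e^{l-1}_k+2L_{\sigma_l}\Delta_l$. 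Unrolling gives $\|\mathbf e^L\|_\infty\le 2C_{\text{net}}\Delta_{\max}$, uniformly over all feasible trajectories, with the explicit $C_{\text{net}}$ of \eqref{eq:unrolled_error}. Theorem~\ref{thm:convergence} only asserts existence of some constant, so this merely rescales it. The uniformity you need therefore does hold. Your fallback of restricting to trajectories induced by the true pre-activations is unnecessary, and it would not be legitimate anyway, since it changes the problem whose optimal value serves as the certificate.
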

\begin{proof}
Theorem~\ref{thm:convergence} implies that the Step-Env lower output margin satisfies \(\underline f_{y_{\text{true}},\bm{\Delta}} - \bar f_{p,\bm{\Delta}} \ge (f_{y_{\text{true}}}^- - C_{\text{net}}\Delta_{\max}) - (f_p^+ + C_{\text{net}}\Delta_{\max}) = m^\star - 2C_{\text{net}}\Delta_{\max} > 0\).
\end{proof}

\section{Monolithic QUBO Formulation Details}\label{appendix_qubo}
In a monolithic QUBO mapping, both the continuous network states $\mathbf{y} \triangleq [\mathbf{x}^\top, \underline{\mathbf{a}}^{1\top}, \bar{\mathbf{a}}^{1\top}, \dots, \underline{\mathbf{z}}^{L\top}, \bar{\mathbf{z}}^{L\top}]^\top$ and the inequality slack variables $\mathbf{s} \ge \mathbf{0}$ are discretized using fixed-point binary expansion schemes:
\begin{align}
y_i &= \ell_i + \sum_{k=1}^{K_{\text{y}}} t^{\text{y}}_k z_{ik}, \quad z_{ik} \in \{0, 1\}, \\
s_j &= \sum_{k=1}^{K_{\text{s}}} t^{\text{s}}_k w_{jk}, \quad w_{jk} \in \{0, 1\},
\end{align}
where $t^{\text{y}}_k$ and $t^{\text{s}}_k$ are predetermined precision bit weights. By concatenating all decision variables into a single unified binary decision vector $\mathbf{x} \triangleq [ (\mathbf{z}^{\text{y}})^\top, \mathbf{w}^\top, \bm{\beta}^\top ]^\top$, the constrained verification optimization $\min_{\mathbf{y}, \bm{\beta}} \tilde{\mathbf{c}}^\top \mathbf{y}$ subject to $\mathbf{A}\mathbf{y} = \mathbf{b}_0 + \mathbf{B}\bm{\beta}$ and $\mathbf{C}\mathbf{y} + \mathbf{s} = \mathbf{d}_0 + \mathbf{D}\bm{\beta}$ is compiled into an unconstrained monolithic QUBO objective:
\begin{align}
\min_{\mathbf{x} \in \{0, 1\}^N} \mathbf{x}^\top \mathbf{Q} \mathbf{x} \triangleq \; & \tilde{\mathbf{c}}^\top \mathbf{y}(\mathbf{x}) + P_{\text{eq}} \|\mathbf{A}\mathbf{y}(\mathbf{x}) - \mathbf{b}_0 - \mathbf{B}\bm{\beta}\|_2^2 \nonumber \\
& + P_{\text{ineq}} \|\mathbf{C}\mathbf{y}(\mathbf{x}) + \mathbf{s}(\mathbf{x}) - \mathbf{d}_0 - \mathbf{D}\bm{\beta}\|_2^2,
\end{align}
where $P_{\text{eq}}, P_{\text{ineq}} > 0$ are penalty multipliers enforcing equality and inequality constraints on Ising hardware.

\section{Details of the Benders Decomposition Framework}\label{appendix_benders}
Under Benders decomposition, the complicating integer variables $\bm{\beta}$ are separated from the continuous state variables $\mathbf{y}$. For any fixed candidate activation pattern $\bm{\beta}$, the continuous subproblem is defined as the linear program:
\begin{align}
\mathrm{SP}(\bm{\beta}): \quad \min_{\mathbf{y}} \quad & \tilde{\mathbf{c}}^\top \mathbf{y} \label{eq:appendix_benders_sp_obj} \\
\text{subject to} \quad & \mathbf{A}\mathbf{y} = \mathbf{b}_0 + \mathbf{B}\bm{\beta}, \label{eq:appendix_benders_sp_eq} \\
& \mathbf{C}\mathbf{y} \le \mathbf{d}_0 + \mathbf{D}\bm{\beta}. \label{eq:appendix_benders_sp_ineq}
\end{align}
Let \(\bm{\pi}\) and \(\bm{\lambda} \ge \mathbf{0}\) denote the dual variables associated with the equality and inequality constraints, respectively. The dual subproblem is given by:
\begin{align}
\max_{\bm{\pi},\bm{\lambda}} \quad & (\mathbf{b}_0 + \mathbf{B}\bm{\beta})^\top \bm{\pi} + (\mathbf{d}_0 + \mathbf{D}\bm{\beta})^\top \bm{\lambda} \label{eq:appendix_benders_dual_obj} \\
\text{subject to} \quad & \mathbf{A}^\top \bm{\pi} + \mathbf{C}^\top \bm{\lambda} = \tilde{\mathbf{c}}, \label{eq:appendix_benders_dual_eq} \\
& \bm{\lambda} \ge \mathbf{0}. \label{eq:appendix_benders_dual_ineq}
\end{align}
By strong duality, if the primal subproblem is feasible and bounded, its optimal value matches the dual. If $\mathrm{SP}(\bm{\beta})$ is infeasible, classical solvers return a Farkas certificate of unboundedness for the dual, which is a direction $(\hat{\bm{\pi}}, \hat{\bm{\lambda}})$ satisfying:
\begin{equation}
\mathbf{A}^\top \hat{\bm{\pi}} + \mathbf{C}^\top \hat{\bm{\lambda}} = \mathbf{0}, \quad \text{with } \hat{\bm{\lambda}} \ge \mathbf{0}.
\end{equation}
The Benders master problem is then formulated as:
\begin{align}
\min_{\bm{\beta}, \theta} \quad & \theta \label{eq:appendix_benders_master_obj} \\
\text{subject to} \quad & \bm{\beta} \in \mathcal{B}, \label{eq:appendix_benders_master_set} \\
& \theta \ge \mathbf{b}_0^\top \bm{\pi}^k + \mathbf{d}_0^\top \bm{\lambda}^k \nonumber \\
& \quad + \bm{\beta}^\top (\mathbf{B}^\top \bm{\pi}^k + \mathbf{D}^\top \bm{\lambda}^k), \quad \forall k \in \mathcal{K}_{\text{opt}}, \label{eq:appendix_benders_master_opt_cut} \\
& \mathbf{b}_0^\top \hat{\bm{\pi}}^j + \mathbf{d}_0^\top \hat{\bm{\lambda}}^j \nonumber \\
& \quad + \bm{\beta}^\top (\mathbf{B}^\top \hat{\bm{\pi}}^j + \mathbf{D}^\top \hat{\bm{\lambda}}^j) \le 0, \quad \forall j \in \mathcal{K}_{\text{feas}}, \label{eq:appendix_benders_master_feas_cut}
\end{align}
where \(\mathcal{K}_{\text{opt}}\) and \(\mathcal{K}_{\text{feas}}\) index the dual extreme points (optimality cuts) and dual extreme directions (feasibility cuts) generated in previous iterations, respectively. The Benders decomposition algorithm proceeds by alternating between solving the master problem over \(\bm{\beta} \in \mathcal{B}\) to obtain a candidate configuration and solving the continuous linear subproblem classically to generate new cuts.

\section{Details and Bounds for Pruning-Induced Robustness Transfer}\label{appendix_pruning_details}

\subsection{Proof of Theorem~\ref{thm:transfer} (Pruning-Induced Robustness Transfer)}\label{appendix_proof}
\begin{proof}
Fix any \(\bm{\delta}\) with \(\|\bm{\delta}\|_p\le\varepsilon\). By definition,
\[
f(\mathbf{x}+\bm{\delta})=g(\mathbf{x}+\bm{\delta})+\mathbf{r}(\mathbf{x}+\bm{\delta}).
\]
Under the uniform residual bound $\|\mathbf{r}(\mathbf{x} + \bm{\delta})\|_{\infty} \le \tau$, applying Lemma~\ref{lem:margin_stability} with \(\mathbf{a}=g(\mathbf{x}+\bm{\delta})\) and \(\mathbf{b}=\mathbf{r}(\mathbf{x}+\bm{\delta})\) yields:
\[
m\left(f(\mathbf{x}+\bm{\delta})\right) = m\left(g(\mathbf{x}+\bm{\delta})+\mathbf{r}(\mathbf{x}+\bm{\delta})\right) \ge m\left(g(\mathbf{x}+\bm{\delta})\right)-2\tau.
\]
Taking the infimum over all feasible \(\bm{\delta}\) gives the lower bound:
\[
\Phi_f(\mathbf{x};\varepsilon)\ge \Phi_g(\mathbf{x};\varepsilon)-2\tau.
\]
The upper bound follows symmetrically from Lemma~\ref{lem:margin_stability}. Substituting $L_g(\mathbf{x};\varepsilon) \le \Phi_g(\mathbf{x};\varepsilon) \le U_g(\mathbf{x};\varepsilon)$ completes the proof. \qedhere
\end{proof}

\subsection{Proof of Lemma~\ref{lem:margin_stability} (Margin Stability)}
\begin{proof}
For any logit vectors $\mathbf{a}, \mathbf{b} \in \mathbb{R}^K$ and label $y$, we have by definition:
\[
m(\mathbf{a} + \mathbf{b}) = a_y + b_y - \max_{k \neq y} (a_k + b_k).
\]
By applying the algebraic properties of the maximum operator, we have:
\[
\max_{k \neq y} (a_k + b_k) \le \max_{k \neq y} a_k + \max_{j} b_j \le \max_{k \neq y} a_k + \|\mathbf{b}\|_\infty.
\]
Additionally, since $b_y \ge -\|\mathbf{b}\|_\infty$, we obtain:
\[
\begin{split}
m(\mathbf{a} + \mathbf{b}) &\ge a_y - \|\mathbf{b}\|_\infty - \left( \max_{k \neq y} a_k + \|\mathbf{b}\|_\infty \right) \\
&= a_y - \max_{k \neq y} a_k - 2\|\mathbf{b}\|_\infty \\
&= m(\mathbf{a}) - 2\|\mathbf{b}\|_\infty.
\end{split}
\]
By swapping the roles of $\mathbf{a}$ and $\mathbf{a}+\mathbf{b}$, we obtain the symmetric inequality:
\[
m(\mathbf{a}) \ge m(\mathbf{a} + \mathbf{b}) - 2\|\mathbf{b}\|_\infty \implies m(\mathbf{a} + \mathbf{b}) \le m(\mathbf{a}) + 2\|\mathbf{b}\|_\infty.
\]
Combining these two inequalities yields the Lipschitz condition:
\begin{equation}\label{eq:margin_2inf_proof}
|m(\mathbf{a} + \mathbf{b}) - m(\mathbf{a})| \le 2 \|\mathbf{b}\|_\infty,
\end{equation}
which completes the proof.
\end{proof}

\subsection{Dataset-Level Certified Accuracy Bounds}
For a dataset $S$, any verifier providing bounds $(L_g, U_g)$ for the pruned model $g$ induces lower and upper bounds on the certified accuracy of the original model $f$:
\begin{align}
\underline{\mathrm{CA}}_f(\varepsilon) &\triangleq \frac{1}{|S|}\sum_{(\mathbf{x},y)\in S}\mathbf{1}\{L_g(\mathbf{x};\varepsilon)>2\tau\}, \\
\overline{\mathrm{CA}}_f(\varepsilon) &\triangleq 1-\frac{1}{|S|}\sum_{(\mathbf{x},y)\in S}\mathbf{1}\{U_g(\mathbf{x};\varepsilon)\le -2\tau\},
\end{align}
which satisfy the sandwich inequality:
\begin{equation}
\underline{\mathrm{CA}}_f(\varepsilon) \le \mathrm{CA}_f(\varepsilon) \le \overline{\mathrm{CA}}_f(\varepsilon).
\end{equation}
Therefore, robustness verification on the simplified model $g$ controls the certification bounds for the original model $f$ up to the safety margin $2\tau$.

\subsection{Closed-Form Derivation of the Uniform Residual Bound $\tau$}
To establish the uniform residual bound condition $\|\mathbf{r}(\mathbf{x} + \bm{\delta})\|_{\infty} \le \tau$ for all $\|\bm{\delta}\|_p \le \varepsilon$, we apply the general layerwise error propagation framework of Lemma~\ref{lem:general_layerwise_error}.

Define the layer maps of the original network $f$ and pruned network $g$:
\[
T_l(\mathbf{h}) \triangleq \sigma(\mathbf{W}_l \mathbf{h}+\mathbf{b}_l), \qquad T_l'(\mathbf{h}) \triangleq \sigma(\mathbf{W}_l' \mathbf{h}+\mathbf{b}_l).
\]
Because $\sigma$ is $1$-Lipschitz under $\|\cdot\|_\infty$, the pruned layer map $T_l'$ has Lipschitz constant $\alpha_l = \|\mathbf{W}_l'\|_{\infty\to\infty}$. For identical intermediate inputs $\mathbf{h}_{l-1}$, the local pruning-injected error is $\|T_l(\mathbf{h}_{l-1}) - T_l'(\mathbf{h}_{l-1})\|_\infty \le \|\Delta \mathbf{W}_l\|_{\infty\to\infty} \|\mathbf{h}_{l-1}\|_\infty$.

Let $H_{l-1}$ denote a computable upper bound on intermediate activation norm $\|\mathbf{h}_{l-1}(\mathbf{x}+\bm{\delta})\|_\infty \le H_{l-1}$ over the perturbation ball (obtained via IBP, CROWN, or LP relaxations). Setting the local injection bound $\delta_l = \|\Delta \mathbf{W}_l\|_{\infty\to\infty} H_{l-1}$ and applying Lemma~\ref{lem:general_layerwise_error} directly yields the explicit closed-form residual bound:
\begin{equation}\label{eq:tau_closed}
\tau \le \sum_{l=1}^L \left(\prod_{j=l+1}^L \|\mathbf{W}_j'\|_{\infty\to\infty}\right) \|\Delta \mathbf{W}_l\|_{\infty\to\infty} H_{l-1},
\end{equation}
guaranteeing that $\|\mathbf{r}(\mathbf{x}+\bm{\delta})\|_\infty \le \tau$ holds uniformly for all $\|\bm{\delta}\|_p \le \varepsilon$.

\subsection{Interval Arithmetic for Neuron Bound Estimation}
\label{sec:interval_details}

In this section, we provide the detailed mathematical formulation of interval arithmetic propagation used for pre-estimating neuron pre-activation bounds. 

Interval arithmetic propagates bounds through the network by considering the range of possible values for each neuron's pre-activation \(z^l_j\). Given the input \(\mathbf{x}\) constrained within an \(\ell_\infty\)-ball of radius \(\varepsilon\) around \(\mathbf{x}_0\), the initial interval for \(\mathbf{x}\) is \([\mathbf{x}_0 - \varepsilon \mathbf{1}, \mathbf{x}_0 + \varepsilon \mathbf{1}]\). For layer \(l\), the pre-activation \(z^l_j = \sum_{k} w^l_{jk} a^{l-1}_k + b^l_j\) is computed using interval operations, where \(a^{l-1}_k \in [\underline{a}^{l-1}_k, \bar{a}^{l-1}_k]\). The interval \([z^l_{j,\min}, z^l_{j,\max}]\) is obtained as:
\begin{align}
z^l_{j,\min} &= \sum_{k: w^l_{jk} \ge 0} w^l_{jk} \underline{a}^{l-1}_k + \sum_{k: w^l_{jk} < 0} w^l_{jk} \bar{a}^{l-1}_k + b^l_j, \\
z^l_{j,\max} &= \sum_{k: w^l_{jk} \ge 0} w^l_{jk} \bar{a}^{l-1}_k + \sum_{k: w^l_{jk} < 0} w^l_{jk} \underline{a}^{l-1}_k + b^l_j.
\end{align}
This process is iterated across all layers, yielding intervals for \(\underline{z}^l_j\) and \(\bar{z}^l_j\) based on the worst-case combinations of previous layer bounds. 

The precomputed intervals \([z^l_{j,\min}, z^l_{j,\max}]\) are compared against the segment boundaries \([v_{i-1}, v_i]\) for \(i = 1, \ldots, n\). If \([z^l_{j,\min}, z^l_{j,\max}]\) does not intersect \([v_{i-1}, v_i]\), we set \(\bar{\beta}_{j,i}^l = 0\) and \(\underline{\beta}_{j,i}^l = 0\) if the segment is deemed infeasible. Specifically, for \(\bar{z}^l_j\) and \(\underline{z}^l_j\), if \(z^l_{j,\max} < v_{i-1}\) or \(z^l_{j,\min} > v_i\), we set \(\bar{\beta}_{j,i}^l = 0\) and \(\underline{\beta}_{j,i}^l = 0\). This pre-estimation reduces the search space by pruning segments that cannot be reached under the given perturbation bounds. For deep networks with many neurons, where the total number of binary variables is \(2 \sum_{l=1}^L n_l \cdot n\), excluding infeasible segments decreases the combinatorial complexity, accelerating the hybrid Ising-classical algorithms. The approach is particularly effective when the network's weight distributions or input perturbations limit the feasible pre-activation ranges, leveraging interval arithmetic's conservatism to guarantee correctness while optimizing efficiency.

\section{Detailed Experimental Setup}\label{appendix_exp_setup}

\paragraph{Datasets and networks.} For the global-QUBO experiments, we train a shallow network on Iris \cite{fisher1936use} and evaluate robustness over 100 test samples (see Figure~\ref{fig:iris_vulnerability} in Appendix~\ref{appendix_numerical_results} for an empirical scatterplot of clean vs.\ perturbed data distributions and misclassified boundary samples). For the hybrid Benders experiments, we train a larger network on \texttt{make\_moons} and again evaluate 100 test samples. Both networks achieve 100\% clean test accuracy, allowing the evaluation to isolate verification behavior from classification error. We consider \(\ell_\infty\)-bounded perturbations, sweeping \(\epsilon \in [0.1,0.6]\) for the ReLU global-QUBO experiment and \(\epsilon \in [0.1,1.0]\) for the Hardtanh and Sigmoid experiments. For the Benders evaluation, we sweep \(\epsilon \in [0.05,0.5]\).

\paragraph{Methods.} We compare the proposed formulations with a comprehensive suite of optimization- and reachability-based baseline verifiers. Table~\ref{tab: comparison_methods} summarizes these baseline methods, including their supported activation functions and formal guarantee properties (soundness and completeness). MIP-Gurobi serves as the exact reference. We solve the global QUBO using both Gurobi (QUBO-Gurobi) and a Coherent Ising Machine (QUBO-CIM). Interval Bound Propagation is used to tighten neuron bounds before encoding. For nonlinear Sigmoid networks, the Step-Env model uses a five-segment piecewise-constant enclosure. In the scalability experiment, the proposed Benders method separates the discrete activation decisions from the continuous verification subproblem.

\begin{table}[t]
\centering
\resizebox{\columnwidth}{!}{
\begin{tabular}{llcc}
\toprule
\textbf{Method} & \textbf{Activation} & \textbf{Sound} & \textbf{Complete} \\
\midrule
\multicolumn{4}{c}{\textbf{Optimization-Based Verifiers}} \\
\midrule
BaB \cite{bunel2018unified}         & ReLU & \checkmark & \checkmark \\
ConvDual \cite{wong2018provable}   & ReLU & \checkmark & $\times$   \\
Duality \cite{dvijotham2018dual}    & monotonic & \checkmark & $\times$   \\
NSVerify \cite{lomuscio2017approach}   & ReLU & \checkmark & \checkmark \\
Reluplex \cite{katz2017reluplex}   & ReLU & \checkmark & \checkmark \\
Sherlock \cite{dutta2017output}   & ReLU & \checkmark & $\times$   \\
\midrule
\multicolumn{4}{c}{\textbf{Reachability-Based Verifiers}} \\
\midrule
AI2 \cite{gehr2018ai2}        & ReLU & \checkmark & $\times$   \\
DLV \cite{huang2017safety}        & any & \checkmark & $\times$   \\
ExactReach \cite{xiang2017reachable} & ReLU & \checkmark & \checkmark \\
FastLin \cite{weng2018towards}    & ReLU & \checkmark & $\times$   \\
FastLip \cite{weng2018towards}    & ReLU & \checkmark & $\times$   \\
MaxSens \cite{xiang2018output}    & monotonic & \checkmark & $\times$   \\
Neurify \cite{wang2018efficient}    & ReLU & \checkmark & $\times$   \\
ReluVal \cite{wang2018formal}    & ReLU & \checkmark & $\times$   \\
\bottomrule
\end{tabular}
}
\caption{Comparison of Verification Methods}
\label{tab: comparison_methods}
\end{table}

\paragraph{Metrics.} For global QUBO verification, we report the number of vulnerable samples for which a valid adversarial counterexample is found. This metric directly tests whether the QUBO encoding reproduces the decisions of the exact verifier. We separately report wall-clock or evolutionary solver time and the number of Ising spins. For Benders decomposition, we report certified accuracy, namely the percentage of test samples whose robustness is formally certified at a given perturbation budget.

\section{Complete Numerical Experimental Results}
\label{appendix_numerical_results}

In this section, we present the complete numerical results and empirical visualizations for all global-QUBO and hybrid Benders verification experiments. Figure~\ref{fig:iris_vulnerability} displays empirical data perturbation results on the Iris dataset under $\epsilon = 0.5$. Tables~\ref{tab:relu_results}, \ref{tab:hardtanh_results}, \ref{tab:sigmoid_results}, and \ref{tab:benders_results} list the detailed vulnerability counts, certified accuracies, solver times, and Ising spin resources across the full sweeps of perturbation budgets $\epsilon$.

\begin{figure*}[t]
    \centering
\includegraphics[width=0.9\textwidth]{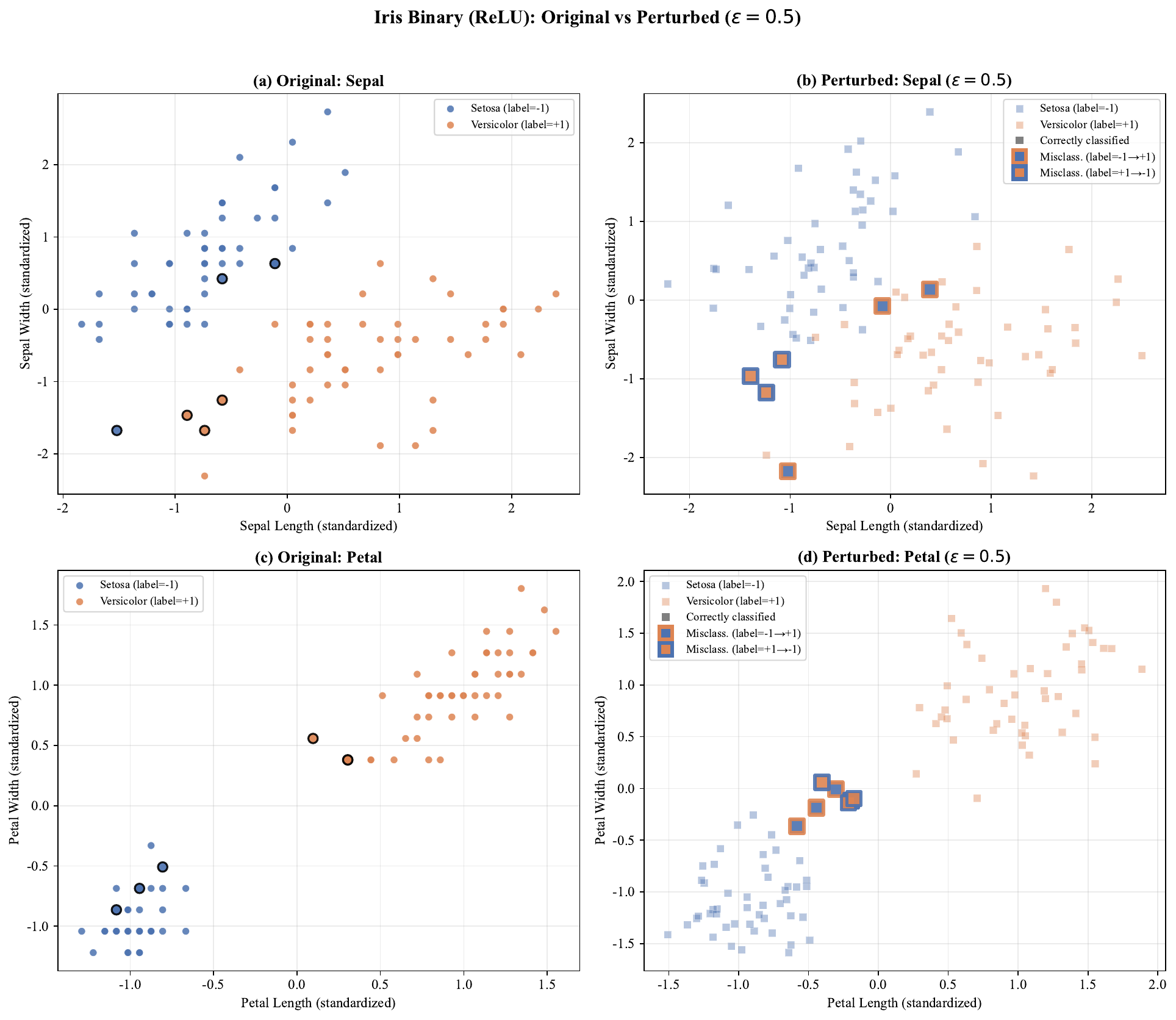} 
    \caption{Empirical experimental results on the Iris binary classification benchmark using a ReLU network under $\ell_\infty$ perturbation ($\epsilon = 0.5$). Panels (a, c): original clean data in sepal space (length vs.\ width) and petal space (length vs.\ width), with the 6 samples susceptible to adversarial attack highlighted by bold black circles. Panels (b, d): the same data after $\epsilon = 0.5$ perturbation. Six samples flip their classification predictions and are marked as squares with opposite-class borders (blue with orange border for true label $-1$ and predicted $+1$; orange with blue border for true label $+1$ and predicted $-1$). The ReLU network achieves 100\% accuracy on clean data and 94\% under perturbation.} 
    \label{fig:iris_vulnerability} 
\end{figure*}

\begin{table*}[t]
	\centering
	\resizebox{\textwidth}{!}{
	\begin{tabular}{ll cccccc}
	\toprule
	\multicolumn{2}{c}{\textbf{Perturbation Budget} \(\epsilon\)} & \textbf{0.1}   & \textbf{0.2}   & \textbf{0.3}   & \textbf{0.4}   & \textbf{0.5}   & \textbf{0.6} \\
	\midrule
	\multicolumn{1}{l}{\multirow{15}{*}{\makecell[l]{Vulnerable\\Samples\\$\uparrow$}}} & AI2 \cite{gehr2018ai2}   & 0   & 0   & 0   & 3    & 6    & 12 \\
		& DLV \cite{huang2017safety}   & 0   & 0   & 0   & 2    & 6    & 12 \\
		& ExactReach \cite{xiang2017reachable} & 0   & 0   & 0   & 2    & 6    & 12 \\
		& MaxSens \cite{xiang2018output} & 0   & 0   & 0   & 3    & 6    & 12 \\
		& Neurify \cite{wang2018efficient} & 0   & 0   & 0   & 2    & 6    & 12 \\
\cmidrule{2-8}          & BaB \cite{bunel2018unified}   & 0   & 0   & 0   & 2    & 6    & 12 \\
		& ConvDual \cite{wong2018provable} & 0   & 0   & 0   & 2    & 6    & 12 \\
		& Duality \cite{dvijotham2018dual} & 0   & 0   & 0   & 2    & 6    & 12 \\
		& ILP \cite{bastani2016measuring} & 0   & 0   & 0   & 2    & 6    & 11 \\
		& NSVerify \cite{lomuscio2017approach} & 0   & 0   & 0   & 2    & 6    & 12 \\
		& Reluplex \cite{katz2017reluplex} & 0   & 0   & 0   & 2    & 6    & 12 \\
		& Sherlock \cite{dutta2017output} & 0   & 0   & 0   & 2    & 6    & 12 \\
\cmidrule{2-8}          & MIP-Gurobi & 0   & 0   & 0   & 2    & 6    & 12 \\
		& QUBO-Gurobi & 0   & 0   & 0   & 2    & 6    & 12 \\
		& QUBO-CIM & 0   & 0   & 0   & 2    & 6    & 12 \\
	\midrule
	\multicolumn{1}{l}{\multirow{3}{*}{\makecell[l]{Time (ms)\\$\downarrow$}}} & MIP-Gurobi & 0.526 & 0.305 & 0.33  & 0.361 & 0.369 & 0.443 \\
		& QUBO-Gurobi & 1137  & \multicolumn{5}{c}{Timeout at 5000 ms} \\
		& QUBO-CIM (evolutionary time) & 2.496 & 2.602 & 2.755 & 11.966 & 2.799 & 8.489 \\
	\midrule
	\multicolumn{1}{l}{\multirow{3}{*}{\makecell[l]{\textbf{Ising Spins}}}} & Average & 198.03 & 240.89 & 255.72 & 300.25 & 320.63 & 338.28 \\
		& Max & 304 & 379 & 384 & 494 & 497 & 531 \\
		& Min & 102	& 116 & 120 & 139 & 143 & 196 \\
	\bottomrule
	\end{tabular}
	}
\caption{Complete Numerical Verification Results for the ReLU Network}
\label{tab:relu_results}
\end{table*}

\begin{table*}[t]
	\centering
    \resizebox{\textwidth}{!}{
	\begin{tabular}{ll cccccccccc}
	\toprule
	\multicolumn{2}{c}{\textbf{Perturbation Budget} \(\epsilon\)} & \textbf{0.1}   & \textbf{0.2}   & \textbf{0.3}   & \textbf{0.4}   & \textbf{0.5}   & \textbf{0.6} & \textbf{0.7} & \textbf{0.8} & \textbf{0.9} & \textbf{1.0} \\
	\midrule
	\multicolumn{1}{l}{\multirow{5}{*}{\makecell[l]{Vulnerable\\Samples\\$\uparrow$}}} & Duality \cite{dvijotham2018dual} & 0   & 0   & 0   & 3    & 12    & 24 & 43 & 52 & 59 & 70 \\
\cmidrule{2-12}          & MaxSens \cite{xiang2018output} & 0   & 0   & 0   & 3    & 12    & 24 & 43 & 52 & 59 & 70 \\
\cmidrule{2-12}          & MIP-Gurobi & 0   & 0   & 0   & 2    & 11    & 21 & 41 & 52 & 59 & 66 \\
		& QUBO-Gurobi & 0   & 0   & 0   & 2    & 10    & 19 & 36 & 46 & 50 & 61 \\
		& QUBO-CIM & 0   & 0   & 0   & 2    & 11    & 21 & 40 & 50 & 57 & 60 \\
	\midrule
	\multicolumn{1}{l}{\multirow{3}{*}{\makecell[l]{Time (ms)\\$\downarrow$}}} & MIP-Gurobi & 0.645 & 0.414 & 0.473  & 0.493 & 0.575 & 0.728 & 0.809 & 0.716 & 0.545 & 0.652 \\
		& QUBO-Gurobi & 1096  & \multicolumn{9}{c}{Timeout at 5000 ms} \\
		& QUBO-CIM (evolutionary time) & 2.195 & 1.792 & 2.386 & 2.478 & 6.522 & 3.854 & 9.556 & 10.390 & 9.747 & 10.113 \\
	\midrule
	\multicolumn{1}{l}{\multirow{3}{*}{\makecell[l]{\textbf{Ising Spins}}}} & Average & 192.39 & 226.69 & 235.50 & 284.13 & 293.93 & 303.62 & 343.77 & 372.32 & 391.64 & 402.71 \\
		& Max & 252 & 296 & 297 & 335 & 337 & 340 & 446 & 453 & 487 & 492 \\
		& Min & 158 & 179 & 204 & 237 & 238 & 238 & 270 & 335 & 336 & 338 \\
	\bottomrule
	\end{tabular}
    }
\caption{Complete Numerical Verification Results for the Hardtanh Network}
\label{tab:hardtanh_results}
\end{table*}

\begin{table*}[t]
	\centering
	\resizebox{\textwidth}{!}{
	\begin{tabular}{ll cccccccccc}
	\toprule
	\multicolumn{2}{c}{\textbf{Perturbation Budget} \(\epsilon\)} & \textbf{0.1}   & \textbf{0.2}   & \textbf{0.3}   & \textbf{0.4}   & \textbf{0.5}   & \textbf{0.6} & \textbf{0.7} & \textbf{0.8} & \textbf{0.9} & \textbf{1.0} \\
	\midrule
	\multicolumn{1}{l}{\multirow{5}{*}{\makecell[l]{Vulnerable\\Samples\\$\uparrow$}}} & Duality \cite{dvijotham2018dual} & 0   & 0   & 0   & 2    & 5    & 10 & 26 & 53 & 67 & 79 \\
\cmidrule{2-12}          & MaxSens \cite{xiang2018output} & 0   & 0   & 0   & 2    & 5    & 10 & 26 & 53 & 67 & 79 \\
\cmidrule{2-12}          & MIP-Gurobi & 0   & 0   & 0   & 2    & 5    & 10 & 26 & 53 & 67 & 79 \\
		& QUBO-Gurobi & 0   & 0   & 0   & 2    & 5    & 10 & 26 & 53 & 67 & 79 \\
		& QUBO-CIM & 0   & 0   & 0   & 2    & 5    & 10 & 26 & 52 & 67 & 79 \\
	\midrule
	\multicolumn{1}{l}{\multirow{3}{*}{\makecell[l]{Time (ms)\\$\downarrow$}}} & MIP-Gurobi & 0.983 & 0.885 & 1.000 & 0.927 & 1.179 & 1.034 & 1.104 & 1.169 & 1.066 & 1.227 \\
		& QUBO-Gurobi & 3367  & \multicolumn{9}{c}{Timeout at 5000 ms} \\
		& QUBO-CIM (evolutionary time) & 2.240 & 2.584 & 2.535 & 2.545 & 4.573 & 3.295 & 4.379 & 3.683 & 4.370 & 3.078 \\
	\midrule
	\multicolumn{1}{l}{\multirow{3}{*}{\makecell[l]{\textbf{Ising Spins}}}} & Average & 64.37 & 74.55 & 79.92 & 85.38 & 90.67 & 91.88 & 96.88 & 97.64 & 100.49 & 103.49 \\
		& Max & 69 & 79 & 84 & 88 & 94 & 94 & 99 & 100 & 104 & 107 \\
		& Min & 60 & 71 & 78 & 83 & 89 & 89 & 95 & 95 & 99 & 101 \\
	\bottomrule
	\end{tabular}
    }
\caption{Complete Numerical Verification Results for the Sigmoid Network}
\label{tab:sigmoid_results}
\end{table*}

\begin{table*}[t]
  \centering
    \resizebox{\textwidth}{!}{
    \begin{tabular}{ll cccccccccc}
    \toprule
    \multicolumn{2}{c}{\multirow{2}{*}{\makecell[l]{\textbf{Certified Accuracy} (\%)\\$\uparrow$}}} & \multicolumn{10}{c}{\textbf{Perturbation Budget} \(\epsilon\)} \\
\cmidrule{3-12}    \multicolumn{2}{c}{} & \textbf{0.05}  & \textbf{0.1}   & \textbf{0.15}  & \textbf{0.2}   & \textbf{0.25}  & \textbf{0.3}   & \textbf{0.35}  & \textbf{0.4}   & \textbf{0.45}  & \textbf{0.5} \\
    \midrule
    \multicolumn{1}{c|}{\multirow{12}[6]{*}{ReLU}} & AI2 \cite{gehr2018ai2}   & 99    & 98    & 97    & 95    & 93    & 89    & 70    & 46    & 30    & 25 \\
          & FastLin \cite{weng2018towards} & 99    & 98    & 97    & 95    & 93    & 89    & 70    & 46    & 30    & 25 \\
          & FastLip \cite{weng2018towards} & 99    & 98    & 97    & 95    & 93    & 89    & 70    & 46    & 30    & 25 \\
          & MaxSens \cite{xiang2018output} & 98    & 89    & 64    & 37    & 15    & 9     & 6     & 2     & 0     & 0 \\
          & Neurify \cite{wang2018efficient} & 100   & 99    & 97    & 97    & 95    & 92    & 85    & 69    & 49    & 36 \\
          & ReluVal \cite{wang2018formal} & 99    & 98    & 97    & 94    & 89    & 81    & 64    & 43    & 31    & 25 \\
\cmidrule{2-12}          & BaB \cite{bunel2018unified}   & 100   & 99    & 97    & 97    & 95    & 92    & 85    & 68    & 49    & 36 \\
          & NSVerify \cite{lomuscio2017approach} & 100   & 99    & 97    & 97    & 95    & 92    & 85    & 69    & 49    & 36 \\
          & Reluplex \cite{katz2017reluplex} & 100   & 99    & 97    & 97    & 95    & 92    & 85    & 69    & 49    & 36 \\
          & Sherlock \cite{dutta2017output} & 99    & 98    & 97    & 95    & 93    & 90    & 74    & 59    & 37    & 29 \\
\cmidrule{2-12}          & MIP   & 100   & 99    & 97    & 97    & 95    & 92    & 85    & 69    & 49    & 36 \\
          & Benders & 100   & 99    & 97    & 97    & 95    & 92    & 85    & 69    & 49    & 36 \\
    \midrule
    \multicolumn{1}{c|}{\multirow{5}[6]{*}{Hardtanh}} & Duality \cite{dvijotham2018dual} & 99    & 93    & 79    & 47    & 42    & 38    & 29    & 19    & 8     & 0 \\
\cmidrule{2-12}          & DLV \cite{huang2017safety}   & 99    & 93    & 79    & 47    & 42    & 38    & 29    & 19    & 8     & 0 \\
          & MaxSens \cite{xiang2018output} & 99    & 91    & 73    & 41    & 38    & 33    & 26    & 15    & 6     & 0 \\
\cmidrule{2-12}          & MIP   & 100   & 100   & 99    & 97    & 95    & 92    & 89    & 82    & 72    & 49 \\
          & Benders & 100   & 100   & 99    & 97    & 95    & 92    & 89    & 82    & 72    & 49 \\
    \midrule
    \multicolumn{1}{c|}{\multirow{5}[5]{*}{Sigmoid}} & Duality \cite{dvijotham2018dual} & 100   & 95    & 84    & 74    & 63    & 49    & 44    & 36    & 26    & 19 \\
\cmidrule{2-12}          & DLV \cite{huang2017safety}   & 100   & 95    & 84    & 74    & 63    & 49    & 44    & 36    & 26    & 19 \\
          & MaxSens \cite{xiang2018output} & 100   & 90    & 78    & 67    & 56    & 44    & 39    & 32    & 22    & 12 \\
\cmidrule{2-12}          & MIP   & 100   & 96    & 85    & 76    & 66    & 52    & 46    & 38    & 31    & 21 \\
          & Benders & 100   & 96    & 85    & 76    & 66    & 52    & 46    & 38    & 30    & 21 \\
    \bottomrule
    \end{tabular}
}
\caption{Complete Numerical Verification Results of Benders Algorithm}
\label{tab:benders_results}
\end{table*}

\end{document}